\documentclass[lettersize,journal]{IEEEtran}

\usepackage{amsthm}
\usepackage{amssymb}
\usepackage{enumerate}
\usepackage{algorithm}
\usepackage{algpseudocode}
\usepackage{graphicx}
\graphicspath{{./figure/}}
\usepackage{multirow}
\usepackage{amsmath}
\usepackage{ulem}

\hyphenation{op-tical net-works semi-conduc-tor}

\newtheorem{theorem}{Theorem}

\newtheorem{definition}{Definition}

\def \ourtool{JANUS\ }

\begin{document}

\title{\ourtool: A Difference-Oriented Analyzer For Financial Centralized Risks in Smart Contracts}

\author{Wansen Wang, Pu Zhang, Renjie Ji, Wenchao Huang, Zhaoyi Meng, Jie Cui, Hong Zhong, Yan Xiong \thanks{Wansen Wang, Pu Zhang, Zhaoyi Meng , Jie Cui and Hong Zhong are with the School of Computer Science and Technology, Anhui University. Email: 22112@ahu.edu.cn}\thanks{Renjie Ji, Wenchao Huang, Yan Xiong  are with the School of Computer Science and Technology, University of Science and Technology of China. Email:huangwc@ustc.edu.cn}\thanks{(Corresponding authors: Zhaoyi Meng, Wenchao Huang)}}

\maketitle

\begin{abstract}

Some smart contracts violate decentralization principles by defining privileged accounts that manage other users' assets without permission, introducing centralized risks that have caused financial losses. 
Existing methods, however, face challenges in accurately detecting diverse centralized risks due to their dependence on predefined behavior patterns.
In this paper, we propose JANUS, an automated analyzer for Solidity smart contracts that detects financial centralized risks independently of their specific behaviors.
JANUS identifies differences between states reached by privileged and ordinary accounts, and analyzes whether these differences are finance-related. 
Focusing on the impact of risks rather than behaviors, JANUS achieves improved accuracy compared to existing tools and can uncover centralized risks with unknown patterns.

To evaluate JANUS's performance, we compare it with other tools using a dataset of 540 contracts. 
Our evaluation demonstrates that JANUS outperforms representative tools in terms of detection accuracy for financial centralized risks. 
Additionally, we evaluate JANUS on a real-world dataset of 33,151 contracts, successfully identifying two types of risks that other tools fail to detect.
We also prove that the state traversal method and variable summaries, which are used in JANUS to reduce the number of states to be compared, do not introduce false alarms or omissions in detection. 

\end{abstract}

\begin{IEEEkeywords}
Smart Contract, Symbolic Execution, Centralized Risk.
\end{IEEEkeywords}


\section{Introduction}
\label{sec:intro}

\IEEEPARstart{S}{olidity}, one of the most popular smart contract languages~\cite{hwang2020gap}, is supported by several blockchain platforms, e.g., Ethereum \cite{ethereum} and Binance Smart Chain \cite{bsc}. 
Solidity smart contracts have been widely adopted, managing digital assets valued at over \$10 billion \cite{marketcap}. 
However, some smart contracts introduce the risk of centralization in their attempt to facilitate the management of users (or claim to do so).
For instance, the SoarCoin contract included a special function allowing the token issuer to retrieve tokens at no cost. 
Although developers claimed this feature was intended for airdrops and development activities, its misuse in June 2018 resulted in an Australian company losing assets worth \$6.6 million \cite{attack1}. 
A similar incident occurred with the Squid token in November 2021, causing financial losses of \$3.38 million for contract users \cite{attack2}. 
These cases highlight a common issue: contracts defining privileged accounts that can manage other users' assets without permission, thereby introducing centralized risks. 
Such risks not only pose potential threats to contract users but can also lead to financial losses. 
Consequently, there is an urgent need for methods to detect and mitigate contract centralized risks to protect users' assets.


Researchers have proposed automated detection methods for centralized risks in smart contracts. 
Pied-piper \cite{ma2023pied} identifies five common patterns of code associated with centralized risks. 
It employs static Datalog analysis \cite{immerman1998descriptive} to detect contracts conforming to these patterns, followed by directed fuzzy testing \cite{bohme2017directed} to minimize false positives. 
Similarly, Tokeer \cite{zhou2024stop} focuses on transfer-related functional modules in contract code. 
It generates oracles based on known rug pull contract patterns and uses Datalog analysis for detection. 
Rug pull contracts~\cite{sun2024sok}, a type of  fraud contracts, often incorporate code for asset transfer with centralized risks. Consequently, detection methods developed for rug pull contracts can also be applied to identify centralized risks in smart contracts.


However, existing methods face challenges in accurately detecting diverse code with centralized risks in real-world scenarios due to their reliance on predefined behavior patterns.
They may underreport when faced with unknown patterns or variants of known behavior patterns. 
Additionally, inaccurate predefined patterns can lead to misclassification of secure contracts as risky ones.



\begin{figure}[H]
\centering
\includegraphics[scale=0.33]{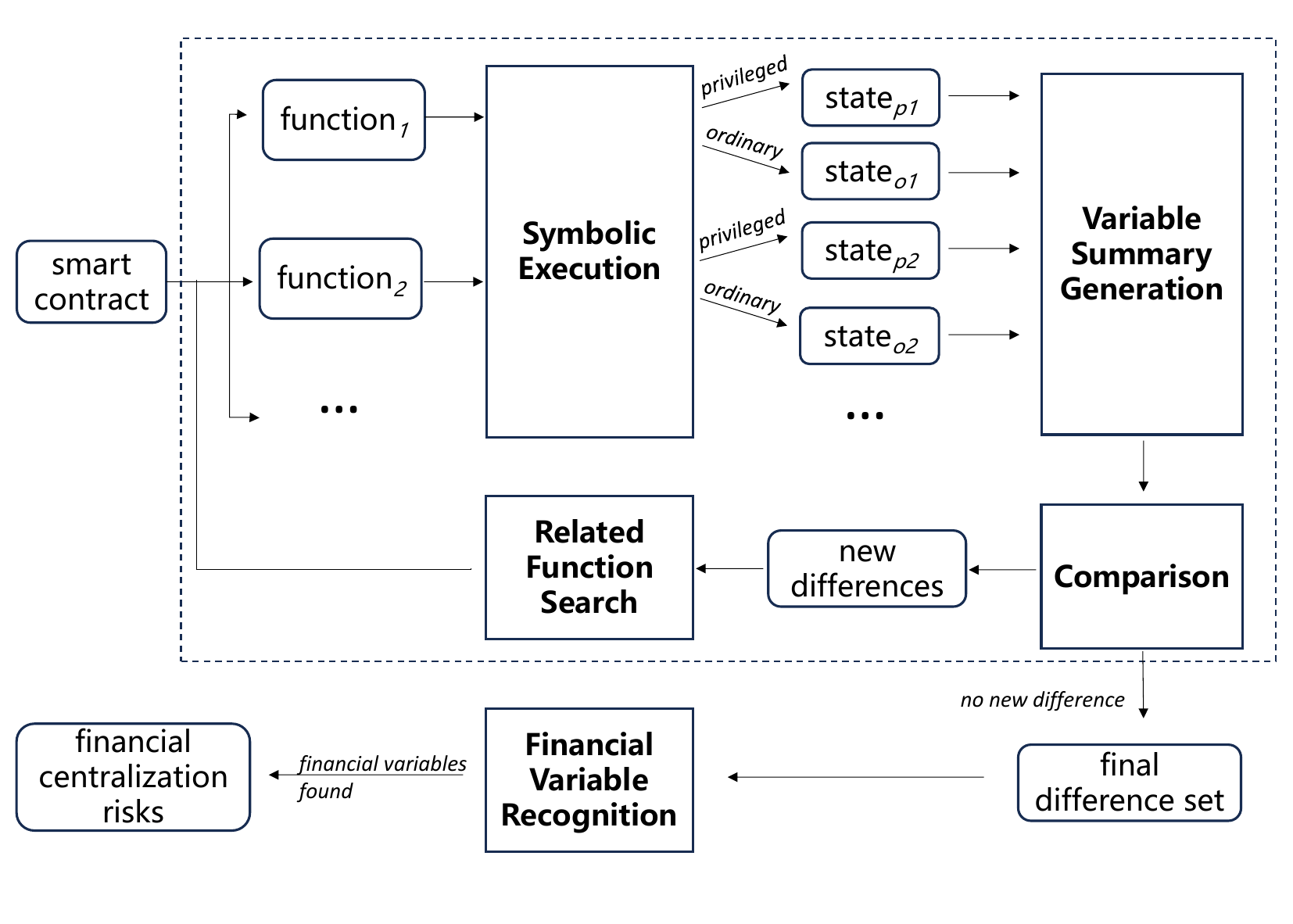}
\caption{Overview of our framework.}
\label{fig:overview}
\end{figure}

In this paper, we propose a detection framework for centralized risks in smart contracts independently of specific behavior patterns, offering improved accuracy compared to existing methods and the ability to uncover risky codes of unknown patterns.

First, our observation reveals that existing risky code often leverages privileged accounts to manipulate variables through execution paths inaccessible to ordinary accounts. 
Privileged accounts are those designated by developers for management purposes, such as the contract owner.
This paper aims to detect centralized risks associated with these privileged accounts to avoid financial losses for users.
Therefore, we focus on manipulation made by these accounts to financial variables used to represent digital asset balances, e.g., the commonly used variable $\texttt{balanceOf}$ indicating token balances.

Based on this observation, we propose an automated detection framework for centralized risks, as illustrated in Fig. \ref{fig:overview}. 
The framework obtains differences between states reached by privileged and ordinary accounts through an iterative algorithm (shown in the dotted box) and analyzes whether these differences relate to financial variables. 
If so, it indicates that privileged accounts can perform specialized operations on their own or ordinary accounts' financial assets, suggesting the possible presence of risks. By focusing on the ultimate impact of risks rather than specific behavior patterns, our framework enables more accurate detection than existing tools.


Specifically, the iterative algorithm for obtaining differences consists of two main steps:
\begin{enumerate}[1)]
\item Given a contract, symbolically execute the functions within it using both privileged and ordinary accounts, then collect the differences in outcomes into an initial difference set.
\item Identify functions affected by the differences in the set. Execute these functions using accounts with different permissions and compare the resultant states. If new differences emerge, add them to the difference set and repeat this step. Terminate the process when no new differences are found.
\end{enumerate}
This algorithm aims to converge on a set of differences by continually propagating and analyzing differences in states reached by accounts with different permissions. 
Additionally, we introduce a method based on graph neural networks (GNN)~\cite{wu2022graph} to automatically determine if the final differences contain financial variables.

However, the iterative algorithm faces two challenges:
\begin{itemize}
\item The number of states to be traversed and compared grows exponentially with the number of function executions when each function is executed by two different accounts (privileged and ordinary accounts).
\item Contract functions can be executed an unlimited number of times, potentially generating an infinite number of result differences. This may lead to new differences in each iteration of the iterative algorithm, preventing the algorithm from terminating.
\end{itemize}

To address the first challenge, we introduce a difference-oriented state traversal method. 
This method focuses on a subset of the contract state space and adds labels for variables exhibiting differences during comparison to preserve the information of states that are omitted.
By doing so, we reduce the growth of traversed states to a linear relationship with the number of function executions. 
We provide proofs that this state traversal method does not introduce false alarms or cause omissions in risk detection.
For the second challenge, we implement an abstraction method that represents the values of variables in the contract states as variable summaries. 
The finite nature of these summaries guarantees that our method ultimately converges to a set of summarized differences, addressing the potential issue of infinite iterations. Furthermore, we also prove that these variable summaries do not lead to underreporting of risks.

Finally, we implement an automated detection tool JANUS~\cite{janus} for financial centralized risks based on our framework and evaluate its effectiveness. 
We collect a comparison dataset containing 540 smart contracts, and the experimental results on this dataset show that \ourtool is more accurate than existing tools, Pied-piper and Tokeer. 
In addition, we evaluate \ourtool in 33,151 real-world contracts, finding two kinds of risks that cannot be detected by existing tools.

Overall, the contributions of this paper are as follows:
\begin{itemize}
\item We propose a framework for detecting financial centralized risks based on the differences in contract states reached by privileged and ordinary accounts, which does not depend on the specific behavior patterns of risks. 
\item We propose an iterative algorithm for obtaining function result differences, using a difference-oriented state traversal method to optimize contract state space exploration and using variable summaries to ensure algorithm termination. We provide proofs that these methods do not introduce false alarms or omissions in risk detection.
\item  We propose a GNN-based method to automatically identify financial variables in smart contracts.
\item  We implement a tool \ourtool based on our framework and show that the accuracy of \ourtool is higher than that of existing tools Pied-piper and Tokeer on a dataset containing 540 smart contracts. 
\item  We evaluate \ourtool on 33,151 real-world smart contracts. Manual inspection of 1,000 contracts confirms that it achieves over 90\% precision and recall, while identifying two previously undetectable risk patterns missed by existing tools.
\end{itemize}


\section{Background}

\subsection{Solidity Smart Contracts}

\begin{figure}[h]
\centering
\includegraphics[scale=0.36]{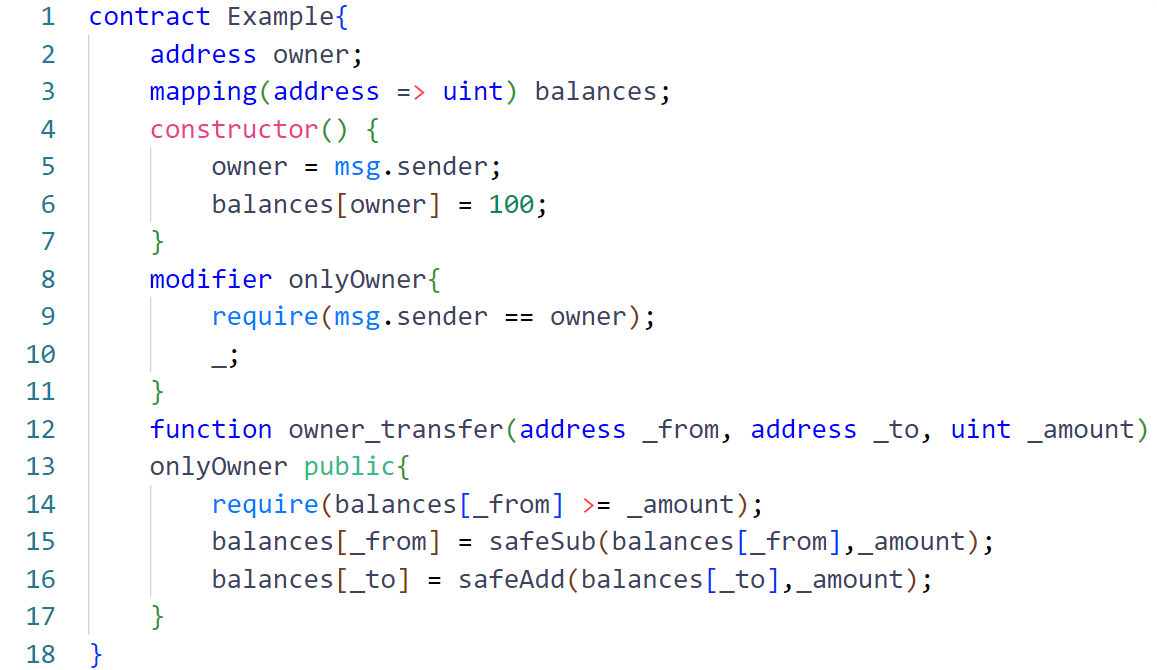}
\caption{An example of Solidity smart contracts.}
\label{fig:backdoor_transfer}
\end{figure}


Smart contracts are automatically executing programs on the blockchain, with Solidity \cite{solidity} being the most popular language for their development. 
This paper focuses on two blockchain platforms that support Solidity: Ethereum and Binance Smart Chain (BSC). 
Both platforms are peer-to-peer networks composed of two types of accounts: contract accounts, which are associated with and controlled by smart contracts, and external accounts, which contain no code. Each account is uniquely identified by an address.

A Solidity smart contract consists of state variables and functions. 
For example, in the \texttt{Example} contract shown in Fig. \ref{fig:backdoor_transfer}, Lines 2 and 3 define the state variables, while Lines 4 through 17 define the functions. State variables, persistently stored on the blockchain, can be read and modified by different functions. 
In this example, \texttt{owner} represents the account address of the contract owner, while \texttt{balances} represents the token balances of different accounts.
Functions in the contract can be invoked by accounts to execute predefined logic. 
The \texttt{constructor} in the \texttt{Example} contract is called upon contract deployment to initialize state variables.
The \texttt{owner\_transfer} function transfers tokens from \texttt{\_from} to \texttt{\_to}. 
This function is restricted by the modifier \texttt{onlyOwner}, which ensures that each call checks if the caller (\texttt{msg.sender}) is equal to the \texttt{owner}. 
If this condition is met, the function proceeds; otherwise, the call fails.

In this paper, we propose two criteria for identifying variables that represent privileged accounts (e.g., \texttt{owner} in Fig.~\ref{fig:backdoor_transfer}): 1) they are address-type state variables of contracts, and 2) their values can only be specified by the developer or other privileged accounts.

\subsection{Existing Centralized Risks in Contracts}
\label{subsec:risks}

The centralized risks, as presented in  \cite{ma2023pied} and \cite{yan2023bad}, are defined as functions that can be exclusively invoked by privileged accounts that may affect other accounts without notification or authorization. Based on summaries of security incidents and feedback from smart contract developers, these papers identify various centralized risks. Our paper focuses on five such risks that can cause financial losses to contract users:

\textbf{\textit{Arbitrarily Transfer}.} \textit{Arbitrarily Transfer} enables privileged accounts, such as the owner of the smart contracts, to transfer tokens from other accounts without any approval. 
Taking Fig. \ref{fig:backdoor_transfer} as an example, the function \texttt{owner\_transfer} can be used to transfer tokens of account \texttt{from} to other accounts without its approval. 
This function is constrained by the \texttt{onlyOwner} modifier, restricting its invocation solely to the contract owner.

\begin{figure}[h]
\centering
\includegraphics[scale=0.4]{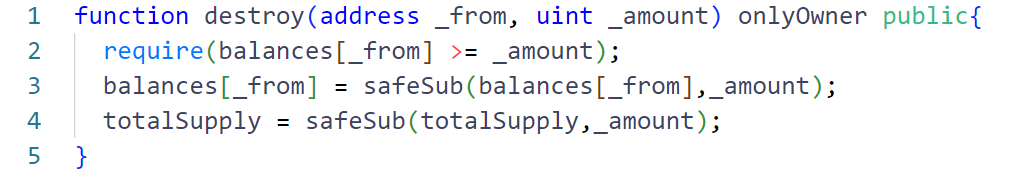}
\caption{An example of \textit{Destroy Account}.}
\label{fig:backdoor_destroy}
\end{figure}

\textbf{\textit{Destroy Account}.} Furthermore, aside from transferring tokens, a privileged account can exploit the \textit{Destroy Account} risks to destroy tokens of other accounts. 
As illustrated in Fig. \ref{fig:backdoor_destroy}, the contract owner can invoke the function \texttt{destroy}  to destroy tokens held by \texttt{\_from} account, resulting in the asset losses of the account.

\begin{figure}[h]
\centering
\includegraphics[scale=0.26]{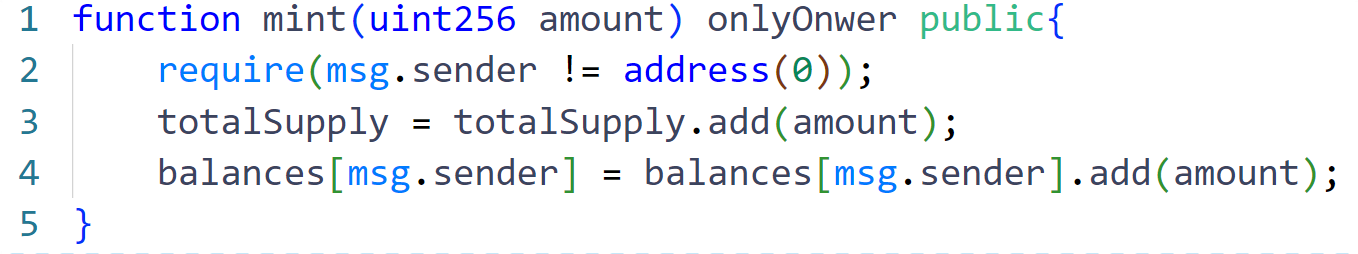}
\caption{An example of \textit{Arbitrarily Mint}.}
\label{fig:backdoor_mint}
\end{figure}

\textbf{\textit{Arbitrarily Mint}.}  \textit{Arbitrarily Mint}  enables certain privileged accounts to increase their token balances arbitrarily, potentially leading to  the inflating of total token supply and the decrease in the value of the associated tokens. 
An example of this kind of risks is shown in Fig. \ref{fig:backdoor_mint}, with the function \texttt{mint} exclusively accessible to the contract owner.

\begin{figure}[h]
\centering
\includegraphics[scale=0.4]{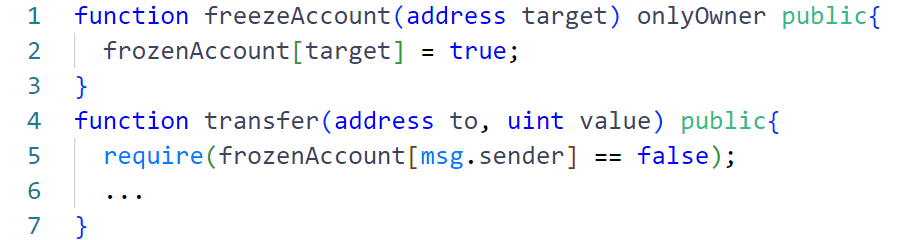}
\caption{An example of \textit{Freeze Account}.}
\label{fig:backdoor_freeze}
\end{figure}

\textbf{\textit{Freeze Account}.} \textit{Freeze Account}  can be leveraged to forbid transfers from a specific account. 
As depicted in Fig. \ref{fig:backdoor_freeze}, the contract owner can designate to freeze transfers from the \texttt{target} account.

\begin{figure}[h]
\centering
\includegraphics[scale=0.4]{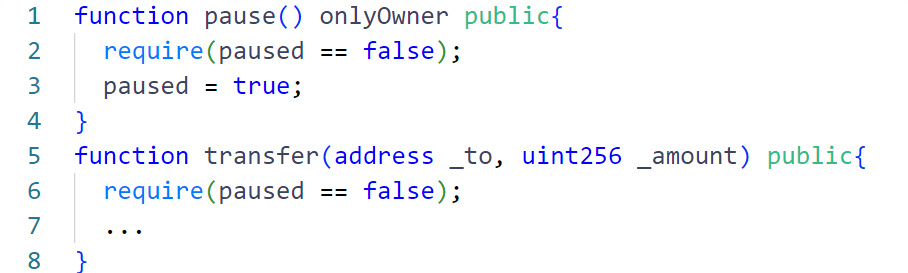}
\caption{An example of \textit{Disable Transferring}.}
\label{fig:backdoor_pause}
\end{figure}

\textbf{\textit{Disable Transferring}.} Similar to \textit{Freeze Account}, a \textit{Disable Transferring} risk can stop all users from transferring tokens. 
As illustrated in Fig. \ref{fig:backdoor_pause}, setting the value of \texttt{paused} to \texttt{true} blocks all users' transferring, with the \texttt{paused} variable exclusively modifiable by the contract owner.

Note that while \cite{ma2023pied} characterizes these risks as backdoor threats, we adopt the term `centralized risk' proposed in \cite{yan2023bad} throughout our paper.




\section{\ourtool}

\subsection{Overview}
\label{subsec:overview}

The aforementioned risky codes exhibit two common features:
\begin{itemize}
\item They are exclusively accessible by privileged accounts, e.g., owners.
\item Their ultimate impact is the manipulation of financial variables, e.g., balances, leading to potential financial losses for users.
\end{itemize}

Based on these features, we propose the framework shown in Fig. \ref{fig:overview} to detect centralized risks:

According to the first feature, there are differences between the states reached by executing risky codes using privileged and ordinary accounts.
Given a contract, we perform symbolic execution of its functions using both privileged and ordinary accounts, comparing the outcome states.
Then, based on the differences obtained, we further search for the function related to the differences and perform the next round of execution.
This iterative process continues until no new differences emerge, culminating in a final difference set.

According to the second feature, the executions of risky codes affect the values of financial variables.
Thus, after obtaining the difference set, we perform financial variable recognition.
If differences related to financial variables are identified, indicating that the execution disparity between privileged and ordinary accounts impacts certain accounts' financial assets, we conclude the given contract is risky.

\begin{figure*}
\centering
\includegraphics[scale=0.35]{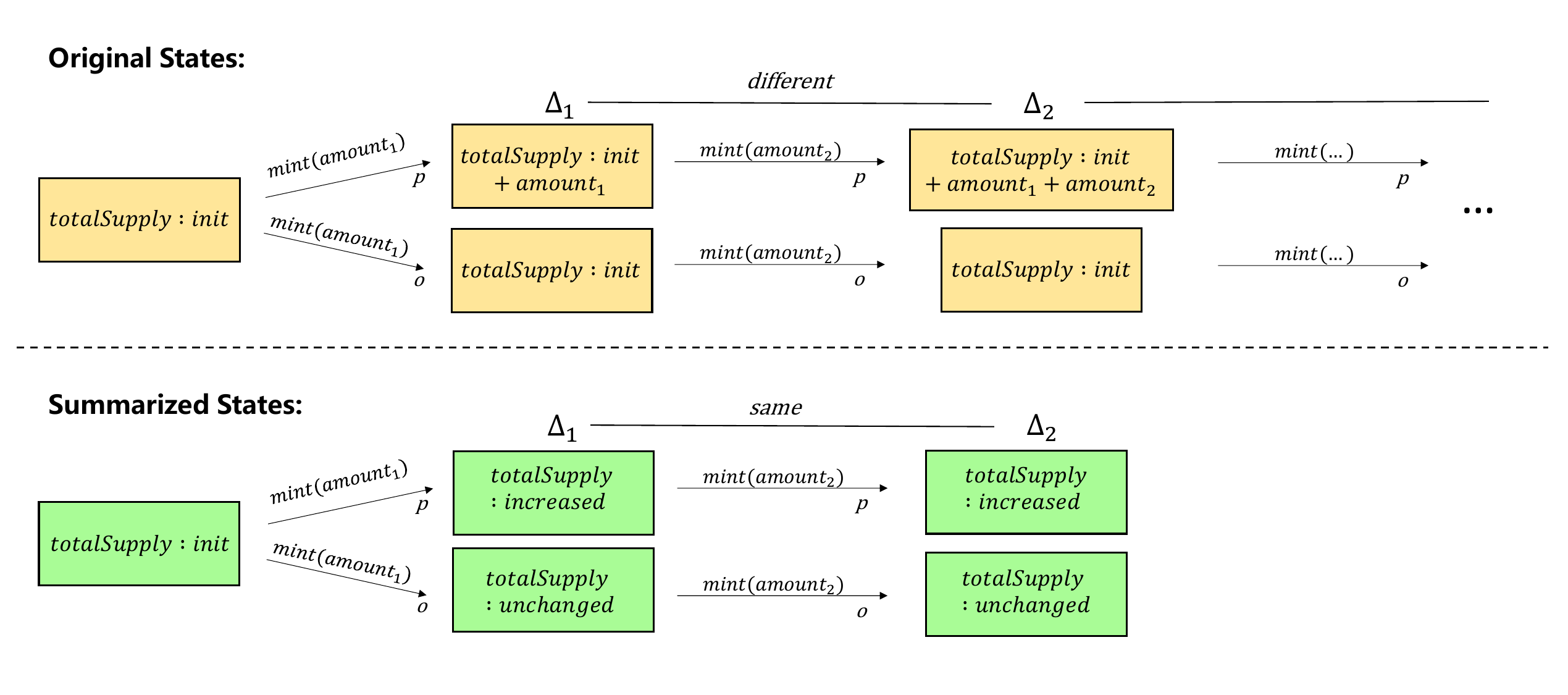}
\caption{An example of the non-termination problem of our iterative algorithm.}
\label{fig:problem}
\end{figure*}

We introduce the following notations for subsequent sections:
Given a contract $C$, we use $V_C$ to denote the set of state variables of $C$.
Denote $s=(\sigma,\pi)$ as a state of $C$, where $\sigma$ is a map from variable names to values and $\pi$ denotes the sequence of statements executed since the last state.
The transition from state $s$ to $s'$ is written as $s \stackrel{f(M)}{\longrightarrow}_{x} s'$.  
Here, $f(M)$ represents the executed function $f$ with parameters $M$ which leads to the transition, and $x \in \{p,o\}$ denote whether a privileged or ordinary account invokes $f$.

\subsection{Technical Challenges}

The technical challenges of our framework primarily lie in the iterative algorithm for obtaining differences: 

First, our framework faces the state space explosion problem.
As illustrated in Fig. \ref{fig:states}, each function execution using privileged accounts and ordinary accounts, denoted by arrows with subscripts \textit{p} and \textit{o} respectively, results in two states. 
To reach any two states in the same layer of the state space separately, at least one function must be executed using accounts with different permissions. 
Consequently, to fully capture the impact of different accounts on the states, we need to compare the states in each layer of the state space pairwise. 
For example, in Layer 4 of Fig. \ref{fig:states}, 28 comparisons (denoted by dotted lines) are required for 8 states. 
As the number of functions increases, the number of states grows exponentially, along with the number of comparisons. 
This exponential growth may ultimately prevent our method from completing state traversal and comparison within limited resources.

\begin{figure}
\flushright
\includegraphics[scale=0.33]{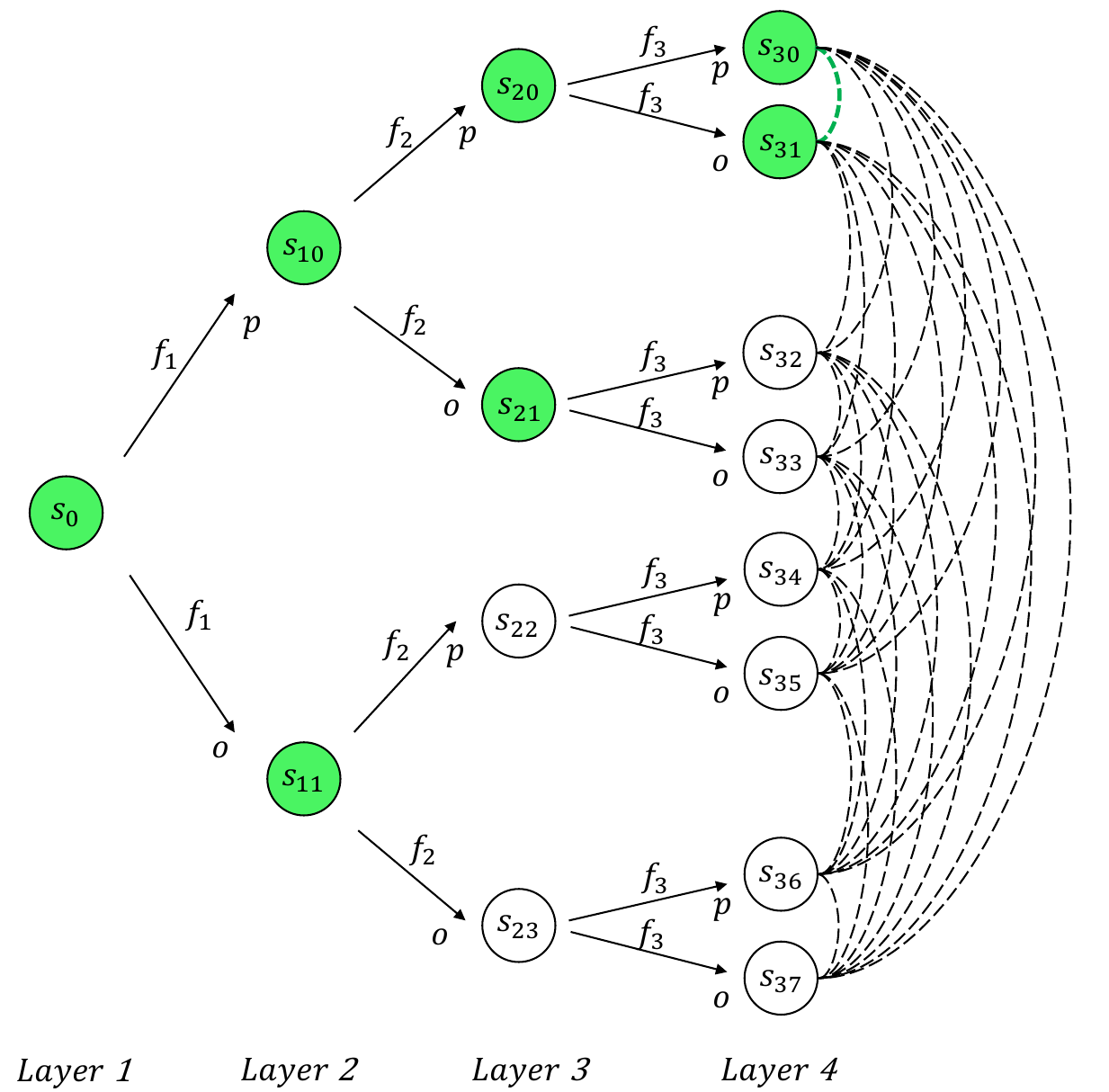} 
\caption{An example of the state space explosion problem of our iterative algorithm.}
\label{fig:states}
\end{figure}

Moreover, since a smart contract's functions can be called any number of times, leading to an infinite number of states, the iterative process of collecting differences may not terminate. 
For instance, consider the contract depicted in Fig.~\ref{fig:backdoor_mint}, where the \texttt{mint} function can be invoked by a privileged account for token generation, while an ordinary account lacks this capability. Assuming the initial value of the variable \texttt{totalSupply} is \textit{init}, executing the function under identical conditions but with different accounts may yield the path depicted in the red box in Fig.~\ref{fig:problem}.
Here, $\Delta_1$ and $\Delta_2$ represent the differences in results after one and two function executions, respectively.
Following our iterative algorithm, we first obtain the difference $\Delta_1$ of the initial round (note that we omit sequence $\pi$ and the variables other than \texttt{totalSupply} for brevity). Since the \texttt{mint} function reads the values of \texttt{totalSupply}, we consider it a related function of $\Delta_1$ and initiate a second round of execution. 
The resulting $\Delta_2$ differs from $\Delta_1$ and is therefore considered a new difference, necessitating a third round of function execution. 
This process continues indefinitely because privileged accounts can keep modifying the value of \texttt{totalSupply}, unlike ordinary accounts, perpetuating the generation of new differences and preventing termination of the iterative process.

\begin{algorithm}
\caption{The Iterative Algorithm for Obtaining Differences in Contract States}
\label{alg:alg1}
\textbf{Input:}  source code $C$ of a contract \\
\textbf{Output:}  differences between the states reached by executing $C$ with privileged account $a_p$ and ordinary account $a_o$
\begin{algorithmic}[1]
\State $F \leftarrow$ the functions in $C$
\State $s_0 \leftarrow$ initial state after executing the constructor of $C$
\State $D \leftarrow \emptyset$
\State $S_{next} \leftarrow \emptyset$
\For{$f$ in $F$}
\State $s_p \leftarrow symbolic\_exec(f,a_p,s_0)$
\State $s_o \leftarrow symbolic\_exec(f,a_o,s_0)$
\State $delta \leftarrow \Delta(\Phi(s_p),\Phi(s_o))$
\If{$delta \neq \emptyset$ and $delta \notin D$}
\State $D.add(delta)$
\State $S_{next}.add(s_p)$
\EndIf 
\EndFor
\State $D_{new} \leftarrow D$
\State $s \leftarrow s_0$
\While{$D_{new} \neq \emptyset$}
\State $D_{new}' \leftarrow \emptyset$
\State $S_{next}' \leftarrow \emptyset$
\For{$delta$ in $D_{new}$}
\State $s \leftarrow S_{next}[D_{new}.index(delta)]$
\State $F' \leftarrow related\_funcs\_search(F,delta)$
\State $tmp \leftarrow \emptyset$
\State $S_{tmp} \leftarrow \emptyset$
\For{$f$ in $F'$}
\State $s_p \leftarrow symbolic\_exec(f,a_p,s)$
\State $s_o \leftarrow symbolic\_exec(f,a_o,s)$
\State $delta' \leftarrow \Delta(\Phi(s_p),\Phi(s_o))$
\If{$delta' \neq \emptyset$ and $delta' \notin tmp$}
\State $tmp.add(delta')$
\State $S_{tmp}.add(s_p)$
\EndIf
\EndFor
\If{$tmp \not\subseteq D$}
\State $D_{new}'.extend(tmp \setminus D)$ 
\State $D.extend(tmp \setminus D)$
\State $S_{next}'.update(S_{tmp},tmp,D)$ 
\EndIf
\EndFor
\State $D_{new} \leftarrow D_{new}'$
\State $S_{next} \leftarrow S_{next}'$
\EndWhile
\end{algorithmic}
\end{algorithm}


\subsection{Our Iterative Algorithm}
\label{subsec:algorithm}

To address the aforementioned challenges, we propose two methods used in the iterative algorithm: difference-oriented state traversal and variable summaries. 
In the following, we first introduce the overall flow of the algorithm, then explain how difference-oriented state traversal is performed within this flow, and finally present the role and categorization of variable summaries used in the algorithm.

\textbf{1) The Flow of the Iterative Algorithm}

Algorithm \ref{alg:alg1} illustrates the overall flow of the iterative algorithm, which takes the smart contract source code as input and produces the differences between the states reached by privileged accounts and ordinary accounts as output. 
The detailed flow is as follows:

\begin{itemize}

\item Initialization.
Retrieve all functions from the smart contract source code for analysis (Line 1). 
Obtain the initial state, $s_0$, by symbolically executing the contract's constructor using a symbolic execution engine implemented based on Slither\cite{feist2019slither} and Z3 \cite{Z3} (Line 2).
Initialize an empty ordered set $D$ to store differences (Line 3). 

\item Difference Set Initialization.
For each function (Line 5), conduct symbolic execution using both privileged and ordinary accounts (Lines 6-7).
Summarize and compare execution results to derive differences (Line 8).
$\Delta$ outputs the difference between two states, it will be introduced in the parts about variable summary along with $\Phi$.
Aggregate resultant differences from all functions into a set (Lines 9-10) for use in subsequent difference propagation.
Here, function $add$ is used to add an element into a set.

\item Difference Propagation.
Begin an iterative propagation process, continuing until no new differences are generated in the previous round (i.e., when $D_{new}$ is empty).
For each difference in $D_{new}$, first search for related functions (Line 21).  
The difference between two states consists of the difference between the individual variables in the two states, and its related functions are the functions that depend on the variables whose values differ between the two states.
We implement the search for related functions of differences using Slither.
Then for each related function (Line 24), conduct symbolic execution using privileged and ordinary accounts (Lines 25-26) and compare execution results to obtain differences (Line 27).
If there are newly discovered differences (Lines 33), add them to the $D_{new}$ set (Lines 34 and 39) for processing in subsequent rounds.
Here, $extend$ is used to combine two sets.
Finally, this iteration ends up with a converged difference set.

\end{itemize}

\textbf{2) Difference-oriented State Traversal} 

We categorize the differences between two states at any layer of the state space into two types:  those caused solely by differences in account permissions, and those stemming from differences in preceding states. 
Leveraging these distinct categories, we introduce a difference-oriented state traversal method to reduce the number of states requiring traversal by combining pruning and difference labeling while ensuring no loss of essential differential information.

First, for each layer in the state space, we selectively compare two states resulting from the execution of privileged and ordinary accounts, respectively. 
Specifically, we implement this step as follows:
In Difference Set Initialization of Algorithm \ref{alg:alg1}, we collect only the state $s_p$ (resulting from privileged account executions) and add them to the $S_{next}$ set (line 11).
In Difference Propagation of Algorithm \ref{alg:alg1}, for each difference $delta$ in $D_{new}$, we use the corresponding state $S_{next}[D_{new}.index(delta)]$ as the starting point for the symbolic execution of current round. 
Here, $D_{new}.index(delta)$ represent the index of $delta$ in set $D_{new}$.
For each function $f$ to be executed (Line 24), we also collect only the state $s_p$, adding it to both the temporary state set $S_{tmp}$ and $S_{next}$ as the initial state for the next round.
Differences in subsequent states of $s_{o}$ are preserved through labeled variables.

Second, motivated by the taint analysis technique \cite{newsome2005dynamic}, we add labels for variables exhibiting differences during comparison. 
This method preserves the second category of differences without increasing the number of traversed nodes.
To implement this step, we replace the states in Algorithm \ref{alg:alg1} with labeled state $l=(\sigma,\pi,\theta)$, where $\sigma$ is a map from variable names to values, $\pi$ denotes the sequence of statements executed since the last state, and $\theta$ is a map from variable names to boolean values indicating whether the variable is labeled.
Similar to the transition of states, the transition from labeled state $l$ to $l'$ is written as $l \stackrel{f(M)}{\longrightarrow}_{x} l'$.
Due to the page limit, we present the modified algorithm in \cite{wang2024janus}.
Formal proofs for this mechanism are provided in Section \ref{subsec:proof}.

Taking Fig. \ref{fig:states} as an example, we traverse and compare only the states represented by green nodes, thereby preserving the first category of differences.
Then, to preserve the second category of differences, when comparing states $s_{10}$ and $s_{11}$, we add labels to variables with different values. 
Subsequently, we conduct a data flow analysis of function $f_2$ and add labels for variables dependent on the labeled ones. 
If a labeled variable undergoes reassignment in $f_2$, and the assignment statement is independent of any labeled variable, we remove the label from that variable.
In the comparison of states $s_{20}$ and $s_{21}$ in Layer 3, labeled variables are not directly compared but are instead incorporated into the set of difference variables. 
These labeled variables, which display value differences between states $s_{22}$ or $s_{23}$ and $s_{20}$, are included in the subsequent analysis of financial variables.
Note that although $s_{22}$ and $s_{23}$ remain untraversed, labeling difference variables during the comparison of $s_{10}$ and $s_{11}$ ensures that when these nodes exhibit variable differences with other nodes in the same layer, the comparison of $s_{20}$ and $s_{21}$ will produce identical differences.

\begin{table*}[]
 \centering 
 \caption{The Design of Variable Summaries for Different Variable Types}
 \label{tab:sum}
\begin{tabular}{c|c|c|c|c}
\hline
\textbf{Variable Type}                             & \textbf{Key}         & \textbf{Category}        & \textbf{Summarized Value}  & \textbf{Description} \\ \hline
\multirow{3}{*}{Numeric}         & \textit{is\_increased}        & \uppercase\expandafter{\romannumeral 1}  & \{True,False\}    &  Whether the variable's value is increased compared to the last state.           \\ \cline{2-5} 
                             & \textit{is\_descreased}        & \uppercase\expandafter{\romannumeral 1}   & \{True,False\}   &  Whether the variable's value is decreased compared to the last state.           \\ \cline{2-5} 
                             & \textit{related\_const\_var}   & \uppercase\expandafter{\romannumeral 2}  & State variables and constants &  The state variables and constants related to the variable's value.           \\ \hline
\multirow{3}{*}{Address}        & \textit{is\_constant}     & \uppercase\expandafter{\romannumeral 1} & \{True,False\}    &  Whether the variable's value is assigned as a constant address.              \\ \cline{2-5} 
                             & \textit{is\_changed}      & \uppercase\expandafter{\romannumeral 1}  & \{True,False\}   &  Whether the variable's value is changed compared to  the last state.   \\ \cline{2-5}         
                             & \textit{related\_const\_var}  & \uppercase\expandafter{\romannumeral 2} & State variables and constants   &  The state variables and constants  related to the variable's value.           \\ \hline
Mapping/ether                             & \textit{key variable set}          & /   & Summaries of value variables  &  The summaries of value variables corresponding to their types.      \\ \hline
\multirow{3}{*}{exec\_state} & \textit{success}                 & \uppercase\expandafter{\romannumeral 2} & \{True,False\}  & Whether the function is executed successfully.          \\ \cline{2-5} 
                             & \textit{revert}                 & \uppercase\expandafter{\romannumeral 2}  & \{True,False\} & Whether the function is reverted during execution.          \\ \cline{2-5} 
                             & \textit{selfdestruct}            & \uppercase\expandafter{\romannumeral 2}  & \{True,False\}  & Whether the function leads to selfdestruct of the contract.         \\ \hline

\end{tabular}
\begin{tabular}{c}
Category \uppercase\expandafter{\romannumeral 1}: generated based on variable values\ \ \ \ Category \uppercase\expandafter{\romannumeral 2}: generated based on the executed statements \\
\end{tabular}
\end{table*}

\textbf{3) Variable Summary}

In both Line 7 and 23 of Algorithm 1, when comparing the differences in states $s_p$ and $s_o$, we first generate a summary for them. 
This abstraction of variable information reduces the number of state differences. 
Using the path depicted in the red box in Fig. \ref{fig:problem}  as an example, if we summarize the states reached by executing \texttt{mint} as the growth trend of variable \texttt{totalSupply}, we get the path shown in the green box in Fig. \ref{fig:problem}.
Regardless of the function's execution frequency, the result differences can be consistently described: the value of \texttt{totalSupply} increases for the privileged account while remaining unchanged for the ordinary account. Consequently, as execution times increase, no new differences emerge, allowing the iterative process to terminate.


Specifically, we summarize the information of a variable into a set of key-value pairs, where different types of variables correspond to different numbers of keys. 
As shown in TABLE \ref{tab:sum}, we design corresponding variable summaries for the following variable types:
\begin{itemize}
\item Numeric Type.
The summary of Numeric Type variables comprises three keys used to denote variable growth and data interdependency with other state variables.
\item Address Type.
The summary of Address Type variables consists of two keys describing the mutability of the address variable and whether it is set to a specific value.
\item Mapping Type.
A Mapping Type variable is a map from a set of key variables to a set of value variables. The summary of a variable of this type has its keys as the set of key variables and its values as the summary of the individual value variables. For instance, consider the \texttt{mapping(address=>uint)} variable called \texttt{balances} in Fig. \ref{fig:backdoor_transfer}. Assume that the key variables of \texttt{balances} are \texttt{\_from} and \texttt{\_to}, and the value variables are denoted by \texttt{balances[\_from]} and \texttt{balances[\_to]}. Given a transition from labeled state $l$ to $l'$ by executing \texttt{owner\_transfer}, we have
{\small
\begin{align}
\phi(l,l',\texttt{balances}) =  \{\texttt{\_to}:\phi(l,l',\texttt{balances[\_to]}  )\notag \\ ,\texttt{\_from}:\phi(l,l',\texttt{balances[\_from]})\} \notag
\end{align}}
Here, \texttt{balances[\_from]} and \texttt{balances[\_to]} are treated as Numeric Type variables, and their summaries are generated according to TABLE \ref{tab:sum}.
Function $\phi(l,l',v)$ outputs the summary of variable $v$ under the transition from $l$ to $l'$.
\item Other Types.
Two additional types of variables, namely \texttt{exec\_state} and \texttt{ether}, are automatically generated during the symbolic execution of contracts. The \texttt{exec\_state} is a string-type variable that can take on three possible values: $success$, $revert$, or $selfdestruct$. 
These values respectively signify whether the function execution succeeds, reverts, or causes the contract to self-destruct.
Each possible value of \texttt{exec\_state} corresponds to a key in its summary. 
When \texttt{exec\_state} takes on a specific value, the corresponding key in the summary is set to true, while the others remain false. During the executions, a variable \texttt{ether} of type \texttt{mapping(address=>uint)} is employed to record the ether balances of accounts. This variable is handled in the same way as the mapping variable described above.
\end{itemize}
Note that since the values of Boolean Type variables can only be \texttt{true} or \texttt{false}, we do not design summaries to abstract the information of them.

Furthermore, we categorize the key-value pairs in the summaries into two types: those generated based on variable values, and those generated based on the executed statements. 
For example, \textit{is\_increased} and \textit{is\_decreased} of Numeric Type variables fall into the former category, as they are determined by changes in the variable's value.
Conversely, \textit{related\_const\_var} belongs to the latter category, as it needs to be generated according to data dependencies obtained from executed statements. 
The specific categorization of each key-value pair is shown in TABLE \ref{tab:sum}.
Since both the summary types and the number of state variables in a contract are fixed, the maximum number of elements in the final difference set is bounded, and the iterative algorithm is theoretically guaranteed to terminate. Consequently, we do not set a maximum iteration count for the algorithm.

Based on the above definitions, we extend the function $\phi$ as follows:
$\phi(l,l',v) = \phi((\sigma,\pi,\theta),(\sigma',\pi',\theta'),v) = \phi_{\sigma}(\sigma,\sigma',\theta',v) \cup \phi_{\pi}(\pi',\theta',v)$.
Here, $\phi_{\sigma}$ generates the summarized values based on variable values, while $\phi_{\pi}$ generates the summarized values based on the executed statements.
In particular, $\theta'$ is used in both functions since the labels of variables are related to both variable values and executed statements.
Then we define $\Phi(l,l') = \{\ \phi(l,l',v)\ |\ \forall v \in V_C  \}$.
We compute the difference $\Delta(\Phi(l,l'),\Phi(l,l''))$ by comparing $\phi(l,l',v)$ and $\phi(l,l'',v)$ for all $v \in V_C$.
If for all $v \in V_C$,  $\phi(l,l',v)$ = $\phi(l,l'',v)$ and $l'.\theta(v) = false$, we consider $\Phi(l,l')$ = $\Phi(l,l'')$ and the difference is empty.
Otherwise, $\Delta(\Phi(l,l'),\Phi(l,l'')) = \{\phi(l,l',v)\ |\ v \in V_C. (\phi(l,l',v) \neq \phi(l,l'',v) \lor l'.\theta(v) = true) \}$.

\subsection{Financial Variable Recognition}
\label{subsec:financial_var}

After collecting the differences, we need to identify whether they contain financial variables. 
To achieve this, we propose heterogeneous variable property graphs to represent the semantics of smart contracts, based on the approach of representing programs through graphs in \cite{liu2021combining} and \cite{zhou2019devign}.
The nodes of heterogeneous variable property graphs are divided into three categories: State Variable Nodes, Local Variable Nodes, and Function Statement Nodes. We distinguish between state variables and local variables because local variables are temporarily saved during function execution and do not appear in the final execution result. 
In other words, our goal is to find finance-related state variables.
The edges of heterogeneous variable property graphs are divided into six categories:
\begin{itemize}
\item Control Flow Edges (CFE): Basic edges between Function Statement Nodes, representing control flow relationships.
\item Data Flow Edges (DFE): Connecting Function Statement Nodes to variable nodes, representing variables read and written by statements.
\item Reference Flow Edges (RFE): Representing the data flow relationship between function parameters and arguments.
\item Control Dependency Edges (CDE): Indicating which variables constrain a Function Statement Node.
\item Data Dependency Edges (DDE): Representing dependencies between different variable nodes.
\item Function Call Edges (FCE): Expressing function call relationships between different Function Statement Nodes.
\end{itemize}
Then, we use graph neural networks \cite{wu2022graph}, an effective method for graph representation learning, to classify the nodes in the constructed graphs to identify financial variables from smart contracts. 
Based on the experimental results shown in Section \ref{subsec:comp_var}, we choose the commonly used Graph Convolutional Network (GCN) \cite{zhang2019graph} and Graph Attention Network (GAT)~\cite{velivckovic2017graph}. 
Our network consists of 5 layers of GCN and 1 layer of GAT, which convert each node in the variable property graphs into a feature vector. 
We then classify the nodes through a Multilayer Perceptron~\cite{popescu2009multilayer} and a Sigmoid function.
The impact of hyperparameters, e.g., the number of layers, is evaluated and demonstrated in \cite{wang2024janus}.
Finally, we check whether the differences contain the financial variables identified through the above process.


\subsection{Proof}
\label{subsec:proof}

First, we propose the following definitions:

\begin{definition}
\label{def1}
(Differential states) Given a smart contract $C$, if there exist states $s$, $s_p$ and $s_o$ that satisfy 

a) $s \stackrel{f(M)}{\longrightarrow}_{p} s_p$

b) $s \stackrel{f(M)}{\longrightarrow}_{o} s_o$ 

c) $\exists v \in V_C. s_p.\sigma(v) \neq s_o.\sigma(v)$

then we say that $s_p$ and $s_o$ are a pair of differential states of $C$, and $s$ is their source.
The variables that satisfy condition c) are called branch variables of  $s_p$ and $s_o$.
The set of all branch variables for states $s_p$ and $s_o$ is denoted by $BV(s_p,s_o)$.
\end{definition}

\begin{definition}
\label{def2}
(Labeled differential states) Given a smart contract $C$, if there exist labeled states $l$, $l_p$ and $l_o$ that satisfy 

a) $l \stackrel{f(M)}{\longrightarrow}_{p} l_p$

b) $l \stackrel{f(M)}{\longrightarrow}_{o} l_o$

c) $\exists v \in V_C. (l_p.\sigma(v) \neq l_o.\sigma(v) \lor l_p.\theta(v) = true)$

then we say that $l_p$ and $l_o$ are a pair of labeled differential states of $C$, and $l$ is their source.
The variables that satisfy condition c) are called branch variables of  $l_p$ and $l_o$.
The set of all branch variables for labeled states $l_p$ and $l_o$ is denoted by $BV(l_p,l_o)$.
\end{definition}


We propose and prove the following theorems (informal) based on the above definitions: 
\begin{enumerate}
\item If a pair of differential states exists in the state space, then a pair of labeled differential states can be found using the difference-oriented state traversal method.
\item  If a pair of labeled differential states is found using the difference-oriented state traversal method, a pair of differential states must exist in the state space.
\item For a pair of labeled differential states, the summaries of all the branch variables are also different.
\end{enumerate}
If the first two theorems hold, it demonstrates that our difference-oriented state traversal method guarantees no omissions or false positives in the detection outcomes. Furthermore, if the third theorem holds, it confirms that the use of variable summaries does not influence the detection results, i.e., it does not alter the determination of whether centralized risks exist.
Due to the page limit, please refer to the \cite{wang2024janus} for the formal theorems and their proofs.

\section{Experiment}
\label{sec:exp}

In this section, we conduct experiments to address the following questions regarding our tool's performance:
\begin{enumerate}[Q1.]
\item How does the effectiveness of \ourtool in detecting financial centralized risks in Solidity smart contracts compare to existing tools?
\item Can \ourtool successfully identify financial centralized risks in real-world smart contracts?
\item How accurately does the financial variable recognition module identify financial variables within contracts?
\end{enumerate}

\subsection{Experimental Setup}
\label{subsec:setup}

Our experiments are conducted on a server with a 2.40GHz CPU, 128GB of RAM, and an NVIDIA GeForce RTX 3090 graphics card, running Ubuntu 18.04.6 LTS.

Our experimental dataset comprises a \textit{comparison dataset} and a \textit{real-world dataset}: 

The \textit{comparison dataset} draws from two sources: 1) the open source dataset provided by Pied-Piper \cite{tool}, containing smart contracts categorized into five types of centralized risks: \textit{Arbitrarily Transfer}, \textit{Destroy Account}, \textit{Arbitrarily Mint}, \textit{Freeze Account}, and \textit{Disable Transferring}.  2) smart contracts with financial centralized risks exhibiting security events, totaling 9 according to \cite{event1} \cite{event2} \cite{event3} \cite{event4}.
Note that we compile each contract using the compiler version specified in its code and exclude non-compilable contracts. 
From these sources, we obtain 270 contracts containing risks. We manually repair the risky code to produce 270 repaired versions of the contracts, i.e., risk-free contracts. Ultimately, the totaling 540 contracts, collectively form our \textit{comparison dataset}.

The \textit{real-world dataset}, sourced from Ethereum and Binance Smart Chain using a crawler \cite{crawler}, comprises 33,151 smart contracts deployed in real-world environments from various addresses.
Note that there is one duplicate contract between comparison dataset and real-world dataset. Due to the small number, no specific action was taken to address this contract.

\subsection{Experiment on the comparison dataset}
\label{subsec:comp}

\begin{table}[]
 \centering 
 \caption{comparative results of tools on comparison dataset}
 \label{tab:compare}
\begin{tabular}{c|c|cccc}
\hline
\multirow{2}{*}{\textbf{Risk Type}}        & \multirow{2}{*}{\textbf{Tool}} & \multicolumn{4}{c}{\textbf{Result}}                                                                                  \\ \cline{3-6} 
                                      &                       & \multicolumn{1}{c|}{\textbf{Total}}                & \multicolumn{1}{c|}{\textbf{FP}} & \multicolumn{1}{c|}{\textbf{FN}} & \textbf{Avg Time(s)} \\ \hline
\multirow{3}{*}{\textit{Arbitrarily Transfer}}      
                                      & Pied-Piper            & \multicolumn{1}{c|}{\multirow{3}{*}{80}}                     & \multicolumn{1}{c|}{0}  & \multicolumn{1}{c|}{3}  & 1.98 \\ \cline{2-2} \cline{4-6}        
                                      & Tokeer                & \multicolumn{1}{c|}{}                     & \multicolumn{1}{c|}{0}  & \multicolumn{1}{c|}{0}  & 0.62 \\ \cline{2-2} \cline{4-6} 
                                      & \ourtool                  & \multicolumn{1}{c|}{}   & \multicolumn{1}{c|}{0}  & \multicolumn{1}{c|}{0}  & 2.11       \\ \hline
\multirow{3}{*}{\textit{Arbitrarily Mint}}                 
                                      & Pied-Piper            & \multicolumn{1}{c|}{\multirow{3}{*}{70}}                     & \multicolumn{1}{c|}{1}  & \multicolumn{1}{c|}{4}  & 5.21        \\ \cline{2-2} \cline{4-6} 
                                      & Tokeer                & \multicolumn{1}{c|}{}                     & \multicolumn{1}{c|}{27} & \multicolumn{1}{c|}{5}  & 1.88     \\ \cline{2-2} \cline{4-6}    
                                      & \ourtool          & \multicolumn{1}{c|}{}         & \multicolumn{1}{c|}{0}  & \multicolumn{1}{c|}{0}  & 17.63       
                                      \\ \hline
\multirow{3}{*}{\textit{Destroy Account}}              
                                      & Pied-Piper            & \multicolumn{1}{c|}{\multirow{3}{*}{116}}                  & \multicolumn{1}{c|}{4}  & \multicolumn{1}{c|}{3}  & 4.16        \\ \cline{2-2} \cline{4-6} 
                                      & Tokeer                & \multicolumn{1}{c|}{}                     & \multicolumn{1}{c|}{14} & \multicolumn{1}{c|}{7} & 0.96    \\ \cline{2-2} \cline{4-6}     
                                      & \ourtool       & \multicolumn{1}{c|}{}            & \multicolumn{1}{c|}{1}  & \multicolumn{1}{c|}{0}  & 7.03         \\ \hline
\multirow{3}{*}{\textit{Disable Transferring}}               
                                      & Pied-Piper            & \multicolumn{1}{c|}{\multirow{3}{*}{124}}                     & \multicolumn{1}{c|}{7}  & \multicolumn{1}{c|}{3}  & 5.34        \\ \cline{2-2} \cline{4-6} 
                                      & Tokeer                & \multicolumn{1}{c|}{}                     & \multicolumn{1}{c|}{/}  & \multicolumn{1}{c|}{/}  & /          \\ \cline{2-2} \cline{4-6}  
                                      & \ourtool                  & \multicolumn{1}{c|}{} & \multicolumn{1}{c|}{0}  & \multicolumn{1}{c|}{0}  & 8.85       \\ \hline
\multirow{3}{*}{\textit{Freeze Account}}                    
                                      & Pied-Piper            & \multicolumn{1}{c|}{\multirow{3}{*}{150}}                     & \multicolumn{1}{c|}{56} & \multicolumn{1}{c|}{4}  & 6.65        \\ \cline{2-2} \cline{4-6} 
                                      & Tokeer                & \multicolumn{1}{c|}{}                     & \multicolumn{1}{c|}{6}  & \multicolumn{1}{c|}{14} & 2.86     \\ \cline{2-2} \cline{4-6}   
                                      & \ourtool                  & \multicolumn{1}{c|}{} & \multicolumn{1}{c|}{0}  & \multicolumn{1}{c|}{0}  & 30.01   \\ \hline
\end{tabular}
\end{table}

\begin{figure}[h]
\centering
\includegraphics[scale=0.38]{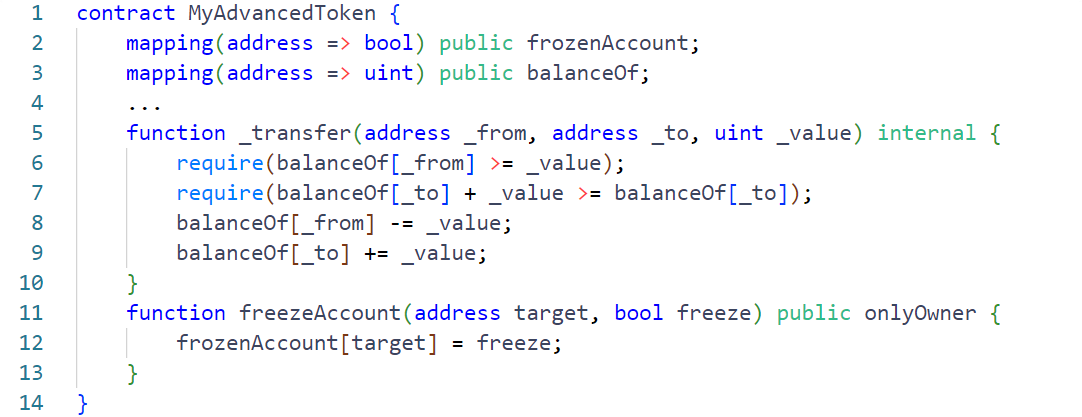}
\caption{An example of FP results detected by Pied-piper.}
\label{fig:Pied_FP}
\end{figure}

\begin{figure}[h]
\centering
\includegraphics[scale=0.38]{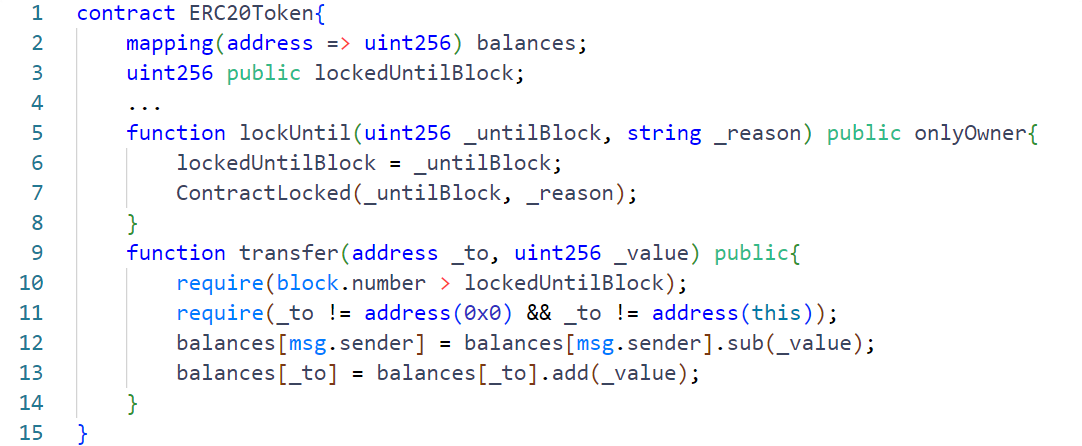}
\caption{An example of FN results detected by Pied-piper.}
\label{fig:Pied_FN}
\end{figure}

\begin{figure}[h]
\centering
\includegraphics[scale=0.36]{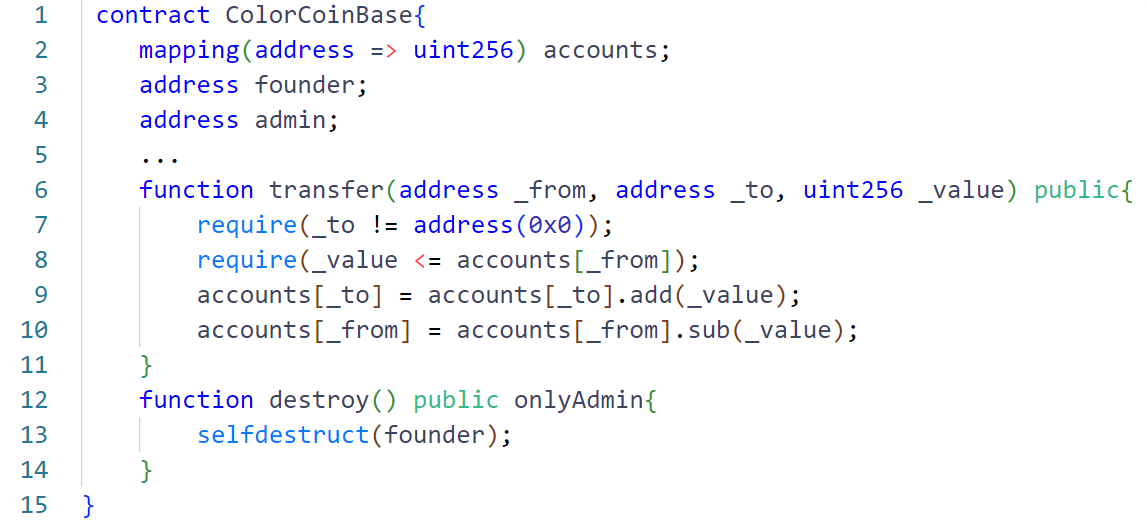}
\caption{An example of FP results detected by Tokeer.}
\label{fig:Tokeer_FN}
\end{figure}

\textbf{Results of \ourtool.} The results presented in TABLE \ref{tab:compare} indicate that \ourtool effectively identifies risks in the \textit{comparison dataset}, yielding 1 FP and 0 FN results across all five types of risks.
We manually check the FP result, finding a contract with a \texttt{destroy} function allowing the owner to arbitrarily destroy other accounts' token balances. 
Our tool flags this as a \textit{Destroy Account} risk. 
However, the contract lacks a token transfer function, rendering the balances economically worthless. 
Thus, we conclude this contract posing no financial harm. 
To avoid this FP result, we could use other existing tools to check token liquidity \cite{tsankov2018securify}. 
However, as defining a token balance variable without a transfer function is uncommon, we haven't implemented this token liquidity check in JANUS.

\textbf{Results of Pied-Piper.} In comparison, Pied-Piper outputs FP and FN results for all five types of risks.
The false results generated by Pied-Piper stem from inaccurate pre-defined patterns.
For instance, Pied-Piper incorrectly reports 56 FP results when identifying \textit{Freeze Account}.
Taking the contract depicted in Fig. \ref{fig:Pied_FP} as an example, it defines a function named \texttt{freezeAccount}, leading Pied-Piper to identify a \textit{Freeze Account} risk based on predefined patterns.
Nevertheless, the \texttt{freezeAccount} function does not impact the execution of the \texttt{transfer} function, i.e., does not impact users' financial assets.
Another illustration is the 3 FN results output by Pied-Piper, each containing a \textit{Disable transferring} risk, with one example depicted in Fig. \ref{fig:Pied_FN}.
In this contract, the owner could use the \texttt{lockUntil} function to lock the whole contract when the block number exceeds a certain threshold, instead of directly setting a Boolean variable to lock the contract, and thus bypass Pied-Piper's pre-defined pattern.

\textbf{Results of Tokeer.} Tokeer's goal is to detect rug pull contracts, so its predefined patterns are not entirely suitable for risk detection, resulting in detection results.
For example, in Fig. \ref{fig:Tokeer_FN}, a risk existing in the \texttt{destroy} function, which is callable only by the contract owner, can destroy the whole contract and cause losses of other accounts. 
However, Tokeer only detects whether the transfer flow of the contract is normal, i.e., if the token balance can be arbitrarily modified or if transfers can be prohibited.
It does not detect whether the contract can be destroyed, and deems the contract to be safe.

\begin{figure*}[h]
\centering
\begin{minipage}{\columnwidth}
\centering
\includegraphics[scale=0.37]{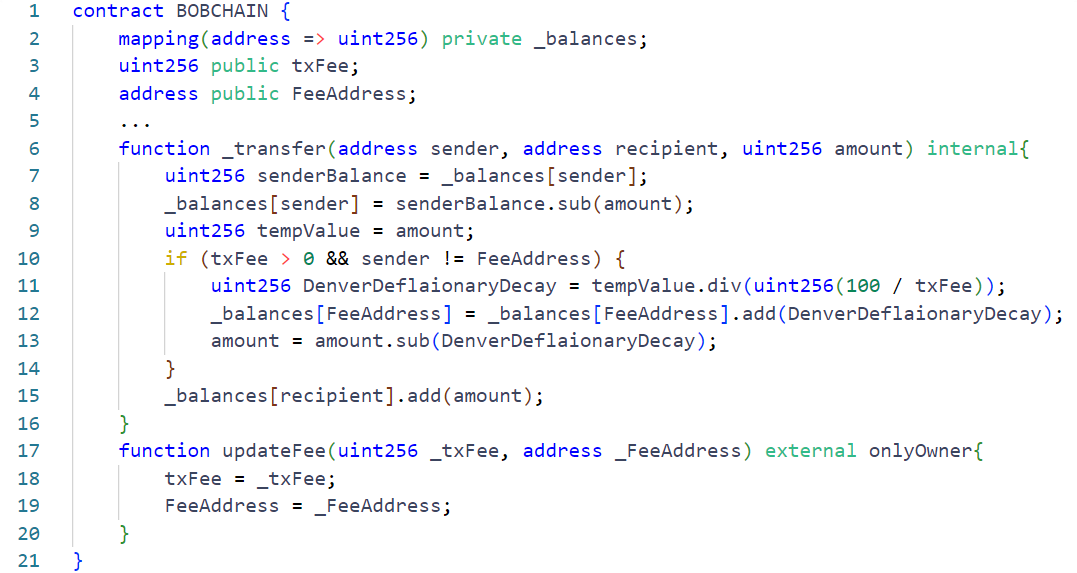}
\caption{An example of \textit{Parameter Manipulation}.}
\label{fig:param}
\end{minipage}
\begin{minipage}{.95\columnwidth}
\centering
\includegraphics[scale=0.39]{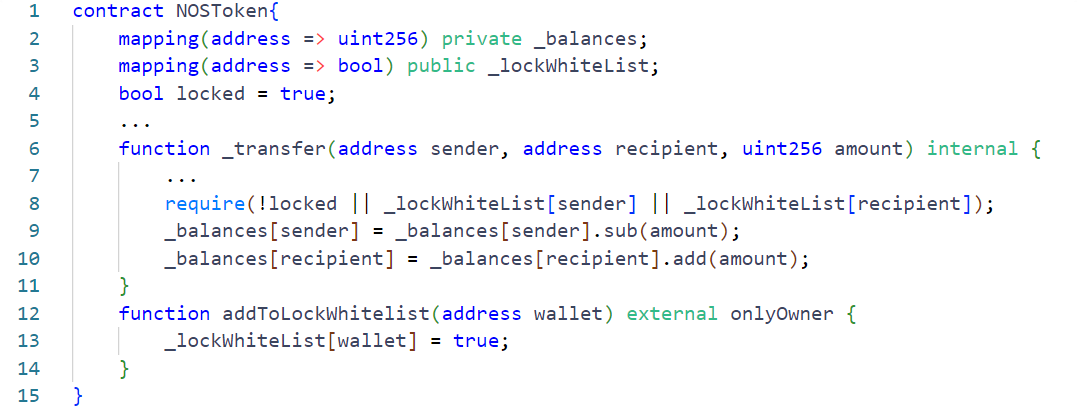}
\caption{An example of \textit{Whitelist}.}
\label{fig:whitelist}
\end{minipage}
\end{figure*}

We also compare the efficiency of tools in detecting centralized risks in smart contracts, and the results are shown in TABLE \ref{tab:compare}.
Since our tool requires symbolic execution of the contract, it takes more time compared to Pied-Piper and Tokeer, which utilize Datalog analysis.
The difference in average detection time between tools is related to the type of risk in the contract being detected.
Specifically, for contracts with \textit{Arbitrarily Transfer}, \textit{Destroy Account}, and \textit{Disable Transferring}, \ourtool exhibits an average detection time no more than 3 seconds longer than other tools. 
For contracts containing \textit{Arbitrarily Mint} and \textit{Freeze Account} risks, \ourtool demonstrates an average detection time surpassing that of other tools by more than 10 seconds but remaining under 30 seconds. 
Considering the advantage of our tool in terms of accuracy, we believe that the aforementioned time overheads are acceptable.

\subsection{Experiment on the real-world dataset}
\label{subsec:real}

To address question Q2, we evaluate  our tool on the \textit{real-world dataset} and conduct a manual check on the results.

Our tool detects 8391 smart contracts with financial centralized risks, with each contract taking an average of 112.44 seconds.
Due to the large number of contracts, we select a subset for inspection. Specifically, we randomly sample 500 contracts flagged as risky by our tool and another 500 contracts identified as non-risky. We invite three researchers in the field of smart contract security to independently review the detection results for these 1000 contracts, and their findings are aggregated. The results show that our tool achieved 455 TP and 457 TN among these 1000 contracts, with a Precision of 91.0\% and a Recall of 91.37\%. This demonstrates, to some extent, that our tool remains effective in detecting centralized risks in real-world contracts.

We analyze the tool's FP and FN results to investigate their causes. Among these, FPs are primarily caused by errors in the financial variable identification module. This module misidentifies some non-financial variables within the final difference set as financial variables, leading to FPs. Conversely, FNs are mainly attributable to our criteria for identifying privileged accounts. Based on common coding practices, we search for privileged accounts among state variables of the \texttt{address} type, consequently overlooking privileged accounts in some contracts. For example, certain contracts use variables of type \texttt{mapping(address => Role)} (where \texttt{Role} is a custom struct) to represent various accounts for refined role functionality, and our tool currently cannot recognize this category of privileged accounts.

Aside from the five known types of centralized risks, two new types of risks, \textit{Parameter Manipulation} and \textit{Whitelist}, are discovered during the manual review of the detection results output by our tool. 

A simplified contract with a \textit{Parameter Manipulation} risk is shown in Fig. \ref{fig:param}, and the address of the original contract is \cite{real1}.
In this contract, the sender of each transfer needs to pay a fee, which is specified by the variable \texttt{txFee}.
However, the contract owner can freely adjust this fee via the \texttt{updateFee} function, initially attracting users with a low transfer fee and later increasing it during transactions to obtain more tokens and revenue.
Additionally, a simplified example with a \textit{Whitelist} risk is illustrated in Fig. \ref{fig:whitelist}.
The address of the original contract is \cite{real2}. 
Upon deployment, the contract initializes the variable \texttt{locked} to \texttt{true}, thereby prohibiting all account-to-account transfers. 
However, as evidenced by the code in Line 12 and 13, the contract owner can utilize the \texttt{addToLockWhitelist} function to set a whitelist, granting specific accounts the  privileges to transfer tokens. 
This type of risks allows the contract owner to extend privileges to additional accounts, potentially causing more harm when combined with other risks.



We find 10 contracts with \textit{Parameter Manipulation}  and 10 contracts with \textit{Whitelist}, whose addresses are given in \cite{janus}. 
These contracts currently are not associated with any financial assets, thus posing no harm.
We find 3 of these contracts that are labeled as phishing contracts by GoPlus Security Lab \cite{goplus}, further reflecting the accuracy of our tool.

We also use Pied-piper and Tokeer on the contracts with \textit{Parameter Manipulation} and \textit{Whitelist} risks, but the tools fail to find these two kinds of centralized risks due to the absence of patterns for them. 
Unlike other approaches, \ourtool successfully detects these risks by identifying differences in balances between privileged and ordinary account executions.
For example, the analysis process of the contract in Fig. \ref{fig:param} by \ourtool is as follows:
\begin{enumerate}[1)]
\item \ourtool executes the functions using the contract owner and an ordinary account, respectively. 
The owner-restricted \texttt{updateFee} causes differing \texttt{txFee} values, so we label this variable in subsequent states.

\item Next, \ourtool identifies functions related to the \texttt{txFee} variable.
Since the execution of the \texttt{\_transfer} function requires reading \texttt{txFee}, \ourtool continues to execute it using different accounts, resulting in two states $s_p$ and $s_o$.

\item  Based on the design of our variable summaries, \ourtool regards \texttt{\_balances[FeeAddress]} as a Numeric Type variable. 
According to Line 11 and 12 of the contract in  Fig. \ref{fig:param}, \texttt{\_balances[FeeAddress]} is related to the labeled variable \texttt{txFee}, thus be labeled and included in the difference set.

\item Since \texttt{\_balances[FeeAddress]} is a financial variable representing the token balance of \texttt{FeeAddress}, our tool determines that there exists a risk.

\end{enumerate}

\subsection{Experiment for our financial variable recognition module}
\label{subsec:comp_var}

To address question Q3, we evaluate our financial variable recognition module on both a \textit{labeled dataset} and the \textit{real-world dataset}. 
 To ensure an accurate evaluation, we use an open-source dataset \cite{opendata} for training our module, avoiding duplication with the real-world dataset. 
 We filter this dataset by removing contracts duplicated in the \textit{real-world dataset} and those without financial variables. We specifically screen two types of contracts: contracts that perform Ether transfers and contracts that perform token transfers.
Using Slither, we identify contracts employing official functions like \texttt{transfer} and \texttt{send} for Ether transfers, as well as ERC-compliant token contracts.
As a supplement, we identify the names of remaining contracts and manually inspected those containing "token" or "coin" in their names.
Ultimately, we obtain 19,986 finance-related contracts.
 

\begin{table}[]
\centering 
 \caption{Effectiveness of our financial variable recognition module using different network structures and edges}
 \label{tab:net}
\begin{tabular}{c|c|c|c}
\hline
\textbf{Network}                   & \textbf{Edges}       & \textbf{Accuracy(\%)}  &  \textbf{F1-score(\%)}    \\ \hline
\multirow{6}{*}{SAGE}     & CFE+DFE     & 96.61  & 96.08 \\
                          & CFE+DFE+RFE & 96.77   & 96.27 \\
                          & CFE+DFE+CDE & 96.85  & 96.39 \\
                          & CFE+DFE+DDE & 96.97  & 96.49 \\
                          & CFE+DFE+FCE & 96.71  & 96.18 \\
                          & All         & 97.47  & 97.04 \\ \hline
\multirow{6}{*}{GCN}      & CFE+DFE     & 97.46  & 96.99 \\
                          & CFE+DFE+RFE & 97.68  & 97.27 \\
                          & CFE+DFE+CDE & 97.53   & 97.14 \\
                          & CFE+DFE+DDE & 98.08   & 97.75 \\
                          & CFE+DFE+FCE & 97.49  & 97.11 \\
                          & All         & 98.22  & 97.90 \\ \hline
\multirow{6}{*}{SAGE+GAT} & CFE+DFE     & 96.58  & 95.98 \\
                          & CFE+DFE+RFE & 97.06  & 96.56 \\
                          & CFE+DFE+CDE & 96.76   & 96.20 \\
                          & CFE+DFE+DDE & 97.19  & 96.69 \\
                          & CFE+DFE+FCE & 97.00   & 96.51 \\
                          & All         & 97.01  & 96.49 \\ \hline
\multirow{6}{*}{GCN+GAT}  & CFE+DFE     & 97.41  & 96.98 \\
                          & CFE+DFE+RFE & 97.66  & 97.31 \\
                          & CFE+DFE+CDE & 97.38  & 96.97 \\
                          & CFE+DFE+DDE & 98.24  & 97.97 \\
                          & CFE+DFE+FCE & 97.38  & 96.99 \\
                          & All         & \textbf{98.64}  & \textbf{98.42} \\ \hline
\end{tabular}
\end{table}

First, we divide the labeled dataset into training, validation, and test sets in a ratio of 8:1:1. 
The training set is used to train graph neural networks, while the module's effectiveness is evaluated on the test set. 
To measure the module's performance in recognizing financial variables, we use two metrics: Accuracy ($\frac{TP+TN}{TP+FP+TN+FN}$) and F1-score ($\frac{2TP}{2TP+FP+FN}$). 
Here, $TP$ represents correctly identified as financial, $FP$ denotes variables incorrectly identified as financial, $FN$ indicates missed financial variables, and $TN$ denotes correctly identified non-financial variables.
In this evaluation, we use various network structures and edge combinations. 
As shown in TABLE \ref{tab:net}, the module using a GCN+GAT structured network, along with all six types of edges described in Section \ref{subsec:financial_var}, achieves the highest Accuracy (98.64\%) and F1 score (98.42\%). 
These results not only demonstrate the effectiveness of our module in recognizing financial variables but also explain our choice of network structure and edge combinations.

Next, we implement JANUS-N, a modified version of our tool that uses a naive method to replace our financial variable recognition module.
This method identifies financial variables based on name similarity to common financial variable names \cite{wang2023automated}.
We compare the effectiveness of JANUS-N and \ourtool in detecting centralized risks on the \textit{real-world dataset}.
The results show that JANUS-N identifies 7,968 smart contracts with centralized risks, 423 fewer than our tool detects.
Upon manual review, we confirm that centralized risks exist in these 423 contracts. 
These contracts use infrequent or obfuscated variable names, which the naive algorithm fails to identify, leading to the underreporting. 
In contrast, our recognition module, which identifies variable features based on contextual information, successfully detects these financial variables.
Note that although our module identifies only 5.31\% (423/7,968) more risks than the naive algorithm, considering that any missed risk could result in financial losses for contract users, using GNN in our module is necessary.

\section{Related Work}
\label{sec:related}

\textbf{Centralized Risks Detection.} There are currently two tools that can automatically detect centralized risks in smart contracts: Pied-Piper and Tokeer. 
Pied-Piper utilizes Datalog analysis to identify common risky code patterns, supplemented by directed fuzzing to reduce false alarms. 
Similarly, Tokeer develops oracles based on known rug pull contract patterns and a model analyzing token contract transfer processes, subsequently employing Datalog analysis to detect rug pull contracts.
However, these tools rely on predefined patterns to detect centralized risks, limiting their ability to identify unknown patterns that may emerge in real-world scenarios.

\textbf{Identification of Malicious Transactions and Accounts.} Some methods focus on identifying malicious smart contract transactions and accounts.
For instance, Cernera et al. \cite{cernera2023token} summarize the transaction features of rug pull contracts to determine if a transaction originates from such a contract. 
Xia et al. \cite{xia2022trade} gather malicious transactions from the Uniswap DEX \cite{uniswap} to train a detection model. 
Hu et al. \cite{hu2021transaction} summarize key features of malicious contract transactions and train an LSTM network \cite{hochreiter1997long} for detecting them. 
Taking a different approach, 
Tan R et al. \cite{tan2021graph} and Zhou J et al. \cite{zhou2022behavior}  employ machine learning to classify the normal and malicious accounts. 
These method focus on maliciousness detection, which may not necessarily involve centralized risks. 





\textbf{Vulnerability Detection.} The methods for detecting vulnerabilities in contracts can be mainly categorized into two approaches: those based on code patterns and those that are independent of code patterns.
Methods based on code patterns include ZEUS \cite{DBLP:conf/ndss/KalraGDS18} using symbolic model checking, ContractFuzzer \cite{0001LC18} using fuzzing and SECURIFY \cite{tsankov2018securify} employing Datalog, etc. 
However, their patterns are not designed for centralized risks, making these methods unsuitable for risk detection.
Methods that do not depend on patterns include MAIAN \cite{nikolic2018finding}, eThor \cite{DBLP:conf/ccs/SchneidewindGSM20}, FASVERIF \cite{wang2023automated}, etc. 
MAIAN  finds contracts that violate specific safety or liveness properties of traces.
EThor abstracts bytecode semantics into Horn clauses and expresses properties as reachability queries.
FASVERIF automatically generates and verifies finance-related properties and models for contracts.
However, these non-pattern-based methods primarily focus on vulnerabilities that external attackers might exploit, overlooking the maliciousness of the contract itself. 

\textbf{LLM-based Detection.} The development of Large Language Models (LLMs) has provided new approaches for smart contract security analysis. Existing research has attempted to apply LLMs to smart contract vulnerability detection and has achieved some progress \cite{yu2025smart}\cite{sun2024gptscan}\cite{li2025scalm}. However, in the specific field of smart contract centralization risk detection, no related work introducing LLMs has been conducted to the best of our knowledge.
At the same time, LLMs still exhibit certain inherent limitations. Compared to directly using LLMs for contract analysis, our tool offers advantages in the following aspects: 
\begin{itemize}
    \item The results are deterministic. LLM outputs may vary for the same contract due to randomness, introducing uncertainty into security analysis \cite{chen2025chatgpt}. In contrast, \ourtool provides stable and reproducible results. 
    \item The results are more reliable. LLMs suffer from "hallucination," which may lead to incorrect judgments \cite{huang2025survey}. JANUS, based on symbolic execution with formal proofs, delivers more reliable outcomes and generates reports that pinpoint risky functions, facilitating further manual verification.
\end{itemize}
However, We believe that combining LLMs with \ourtool could enhance overall effectiveness through complementary strengths. For instance, LLMs could assist in identifying financial variables within contracts, improving detection accuracy through semantic understanding. Additionally, LLMs could guide path pruning during symbolic execution \cite{chen2025numscout}, thereby mitigating state explosion. In the future, we will further explore these directions to optimize our tool's performance.

\section{Limitations and Future Work}
\label{sec:limit}

 We summarize several limitations of \ourtool as follows:

\textbf{Limited Detection Scope.} Our tool currently focuses on detecting centralized risks primarily associated with finance-related variables (e.g., token balances) and privileged accounts identified from state variables of the \texttt{address} type. Consequently, our tool may miss non-financial risks (e.g., mutable metadata~\cite{yan2023bad}) or privileged accounts represented using other types, potentially resulting in false negatives. As a supplement, we provide a customizable interface that allows users to specify target variables manually, thereby extending detection to non-financial risks. 
It is noteworthy that even within this limited scope, our tool has successfully identified practical centralized risk cases overlooked by existing tools, demonstrating its effectiveness and value. 
In the future, we plan to integrate large language models, which have been used by multiple studies\cite{sun2024gptscan}\cite{yu2025smart} to understand smart contract semantics, to identify a broader range of key variables and privileged account variables.

\textbf{Dependency on External Static Analysis Engine.} \ourtool leverages Slither for foundational static analysis to avoid reinventing established components. As a result, we cannot definitively determine if the symbolic execution process generates infeasible states or if the data flow analysis leads to omissions in functions that require analysis or variables that need labeling. However, it is important to note that our iterative algorithm is independent and operates on an intermediate representation. This means that while Slither is currently used, our method could be adapted to other analysis engines (e.g., Mythril) that provide a similar code representation. The current implementation's reliability is supported by its performance on the \textit{comparison dataset}, where no obvious issues from Slither's analysis are observed.
In the future, we plan to integrate other static analysis tools and employ cross-validation to enhance the reliability of foundational analysis and develop a generic interface to enable more flexible integration of our core algorithms with different static analysis backends.



\textbf{Symbolization of the results of external calls.} When a contract contains a statement calling an external contract whose code is unknown, our tool represents the result of that external call as a symbolic value. 
While this approach allows for analysis to continue, it may lead to inaccurate detection results. 
This limitation that can only analyze codes given beforehand is an inherent drawback of static analysis methods \cite{onwuzurike2019mamadroid}. 
In the future, we aim to enhance the accuracy of our detection results by combining our tool with dynamic analysis to obtain actual external call results.


\textbf{The trade-off between accuracy and efficiency.} While our tool's execution speed is currently slower than that of pattern-based matching tools, limiting its practical deployment, it offers superior detection accuracy and the capability to identify unknown risks through symbolic execution. These advantages are critical in real-world applications. 
In the future, we plan to further optimize the tool’s efficiency through methods such as parallel analysis of multiple contracts and incorporating preliminary screening with other tools.

\section{Conclusion}

We propose and implement JANUS, which can detect financial centralized risks independently of their behavior patterns.
JANUS employs a difference-oriented state traversal method and variable summaries to reduce the number of states to be compared.
We prove that these methods do not introduce false alarms or omissions in detection. 
\ourtool outperforms other tools in terms of risk detection accuracy, and it successfully finds two types of risks that other tools fail to detect.

\normalem
\bibliographystyle{IEEEtran}
\bibliography{submission}



\appendix

\section{Appendix}

\subsection{The Iterative Algorithm for Obtaining Differences in Labeled States}
\label{subsec:appendix1}

To facilitate understanding, a simplified state representation is used in the algorithm in our manuscript. The algorithm we employ in our tool is shown below, which follows the same main flow but replaces the states with labeled states.
In this algorithm, $labeled\_symbolic\_exec(f,a,l)$ symbolically executes the function $f$ using account $a$, and label the variables dependent on the labeled variables in $l$.

\begin{algorithm}
\small
\caption{The Iterative Algorithm for Obtaining Differences in Labeled States}
\label{alg:alg2}
\textbf{Input:}  source code $C$ of a contract \\
\textbf{Output:}  differences bewtween the labeled states reached by executing $C$ with  $a_p$ and  $a_o$
\begin{algorithmic}[1]
\State $F \leftarrow$ the functions in $C$
\State $l_0 \leftarrow$ initial labeled state after executing the constructor of $C$ (The set of labeled variables is $\emptyset$)
\State $D \leftarrow \emptyset$
\State $S_{next} \leftarrow \emptyset$
\For{$f$ in $F$}
\State $l_p \leftarrow labeled\_symbolic\_exec(f,a_p,l_0)$
\State $l_o \leftarrow labeled\_symbolic\_exec(f,a_o,l_0)$
\State $delta \leftarrow \Delta(\Phi(l_p),\Phi(l_o))$
\If{$delta \neq \emptyset$ and $delta \notin D$}
\State $D.add(delta)$
\State $S_{next}.add(l_p)$
\EndIf 
\EndFor
\State $D_{new} \leftarrow D$
\State $l \leftarrow l_0$
\While{$D_{new} \neq \emptyset$}
\State $D_{new}' \leftarrow \emptyset$
\State $S_{next}' \leftarrow \emptyset$
\For{$delta$ in $D_{new}$}
\State $l \leftarrow S_{next}[D_{new}.index(delta)]$
\State $F' \leftarrow related\_funcs\_search(F,delta)$
\State $tmp \leftarrow \emptyset$
\State $S_{tmp} \leftarrow \emptyset$
\For{$f$ in $F'$}
\State $l_p \leftarrow labeled\_symbolic\_exec(f,a_p,l)$
\State $l_o \leftarrow labeled\_symbolic\_exec(f,a_o,l)$
\State $delta' \leftarrow \Delta(\Phi(l_p),\Phi(l_o))$
\If{$delta' \neq \emptyset$ and $delta' \notin tmp$}
\State $tmp.add(delta')$
\State $S_{tmp}.add(l_p)$
\EndIf
\EndFor
\If{$tmp \not\subseteq D$}
\State $D_{new}'.extend(tmp \setminus D)$ 
\State $D.extend(tmp \setminus D)$
\State $S_{next}'.update(S_{tmp},tmp,D)$ 
\EndIf
\EndFor
\State $D_{new} \leftarrow D_{new}'$
\State $S_{next} \leftarrow S_{next}'$
\EndWhile
\end{algorithmic}
\end{algorithm}

\subsection{The Experiments about the Hyperparameters of Our Financial Variable Recognition Module}
\label{subsec:appendix2}

To illustrate the choice of hyperparameters of our recognition module, we conduct experiments for the following hyperparameters:
1) Number of GCN layers. 2) Number of heads in GAT. 3) Combination of input size and hidden size.
Based on the experimental results shown in Figs. \ref{fig:net1}, \ref{fig:net2}, and \ref{fig:net3}, we finally set the number of GCN layers to 5, the number of heads to 2, the input size to 100, and the hidden size to 256, achieving the highest accuracy and F1 score for our module.

\begin{figure}[H]
\centering
\includegraphics[scale=0.33]{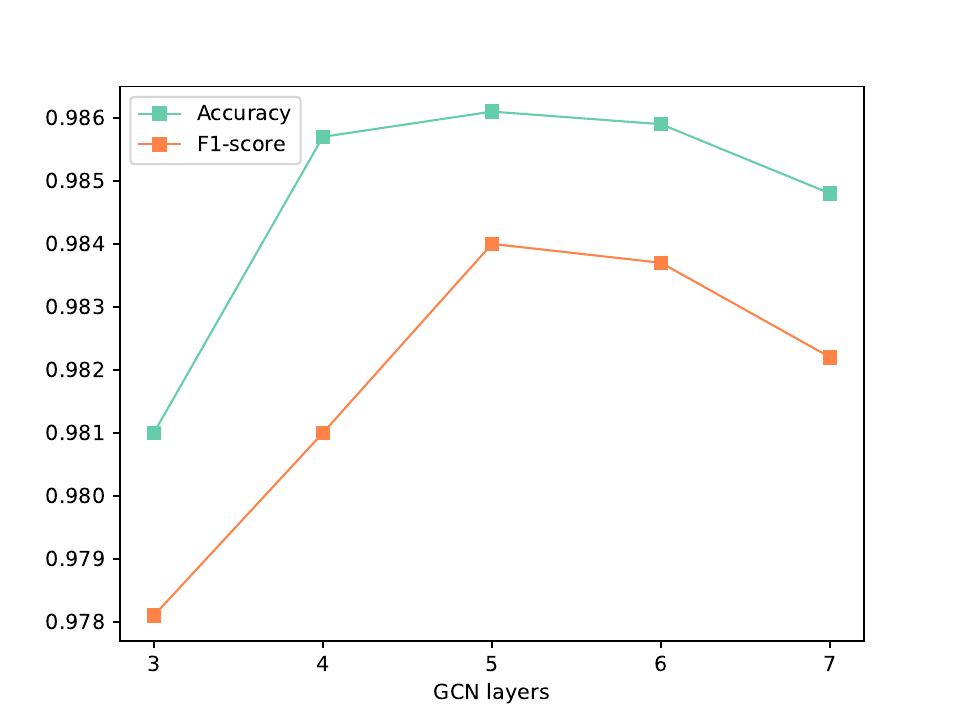}
\caption{Effectiveness of our financial variable recognition module with different number of GCN layers.}
\label{fig:net1}
\end{figure}

\begin{figure}[H]
\centering
\includegraphics[scale=0.33]{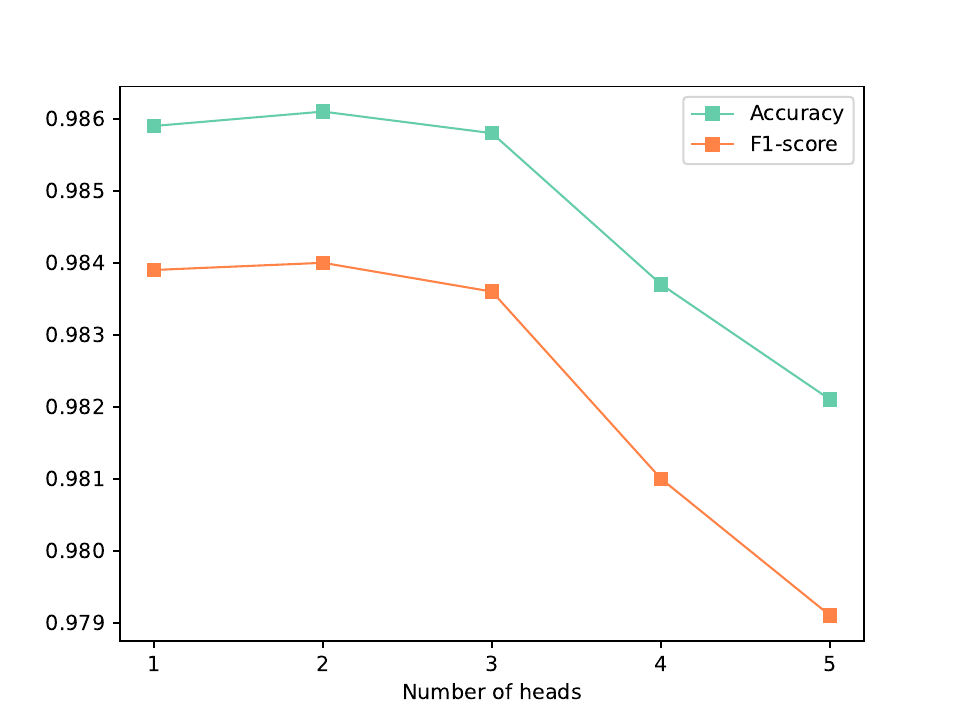}
\caption{Effectiveness of our financial variable recognition module with different number of heads in GAT.}
\label{fig:net2}
\end{figure}

\begin{figure}[H]
\centering
\includegraphics[scale=0.33]{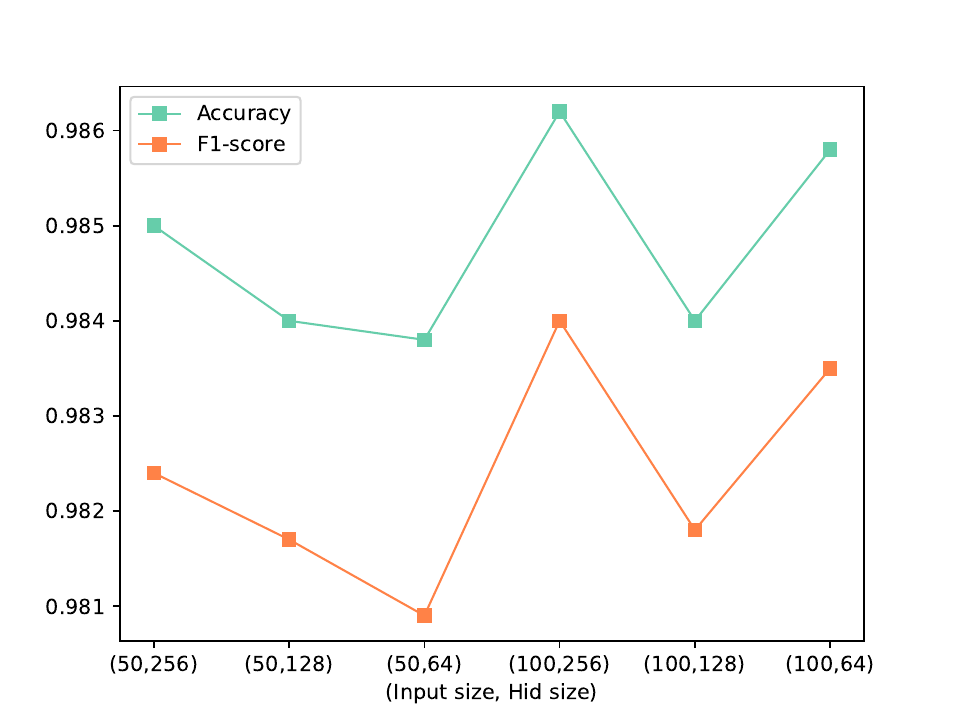}
\caption{Effectiveness of our financial variable recognition module with different input size and hidden size.}
\label{fig:net3}
\end{figure}

\subsection{Theorems and Proofs}

\begin{definition}
\label{def1}
(Differential states) Given a smart contract $C$, if there exist states $s$, $s_p$ and $s_o$ that satisfy 

a) $s \stackrel{f(M)}{\longrightarrow}_{p} s_p$

b) $s \stackrel{f(M)}{\longrightarrow}_{o} s_o$ 

c) $\exists v \in V_C. s_p.\sigma(v) \neq s_o.\sigma(v)$

then we say that $s_p$ and $s_o$ are a pair of differential states of $C$, and $s$ is their source.
The variables that satisfy condition c) are called branch variables of  $s_p$ and $s_o$.
The set of all branch variables for states $s_p$ and $s_o$ is denoted by $BV(s_p,s_o)$.
\end{definition}

\begin{definition}
\label{def2}
(Labeled differential states) Given a smart contract $C$, if there exist labeled states $l$, $l_p$ and $l_o$ that satisfy 

a) $l \stackrel{f(M)}{\longrightarrow}_{p} l_p$

b) $l \stackrel{f(M)}{\longrightarrow}_{o} l_o$

c) $\exists v \in V_C. (l_p.\sigma(v) \neq l_o.\sigma(v) \lor l_p.\theta(v) = true)$

then we say that $l_p$ and $l_o$ are a pair of labeled differential states of $C$, and $l$ is their source.
The variables that satisfy condition c) are called branch variables of  $l_p$ and $l_o$.
The set of all branch variables for labeled states $l_p$ and $l_o$ is denoted by $BV(l_p,l_o)$.
\end{definition}

\begin{table*}[]
 \centering 
 \caption{The Design of Variable Summaries for Different Variable Types}
 \label{tab:sum}
\begin{tabular}{c|c|c|c|c}
\hline
\textbf{Variable Type}                             & \textbf{Key}         & \textbf{Category}        & \textbf{Summarized Value}  & \textbf{Description} \\ \hline
\multirow{3}{*}{Numeric}         & \textit{is\_increased}        & \uppercase\expandafter{\romannumeral 1}  & \{True,False\}    &  Whether the variable's value is increased compared to the last state.           \\ \cline{2-5} 
                             & \textit{is\_descreased}        & \uppercase\expandafter{\romannumeral 1}   & \{True,False\}   &  Whether the variable's value is decreased compared to the last state.           \\ \cline{2-5} 
                             & \textit{related\_const\_var}   & \uppercase\expandafter{\romannumeral 2}  & State variables and constants &  The state variables and constants related to the variable's value.           \\ \hline
\multirow{3}{*}{Address}        & \textit{is\_constant}     & \uppercase\expandafter{\romannumeral 1} & \{True,False\}    &  Whether the variable's value is assigned as a constant address.              \\ \cline{2-5} 
                             & \textit{is\_changed}      & \uppercase\expandafter{\romannumeral 1}  & \{True,False\}   &  Whether the variable's value is changed compared to  the last state.   \\ \cline{2-5}         
                             & \textit{related\_const\_var}  & \uppercase\expandafter{\romannumeral 2} & State variables and constants   &  The state variables and constants  related to the variable's value.           \\ \hline
Mapping/ether                             & \textit{key variable set}          & /   & Summaries of value variables  &  The summaries of value variables corresponding to their types.      \\ \hline
\multirow{3}{*}{exec\_state} & \textit{success}                 & \uppercase\expandafter{\romannumeral 2} & \{True,False\}  & Whether the function is executed successfully.          \\ \cline{2-5} 
                             & \textit{revert}                 & \uppercase\expandafter{\romannumeral 2}  & \{True,False\} & Whether the function is reverted during execution.          \\ \cline{2-5} 
                             & \textit{selfdestruct}            & \uppercase\expandafter{\romannumeral 2}  & \{True,False\}  & Whether the function leads to selfdestruct of the contract.         \\ \hline

\end{tabular}
\begin{tabular}{c}
Category \uppercase\expandafter{\romannumeral 1}: generated based on variable values\ \ \ \ Category \uppercase\expandafter{\romannumeral 2}: generated based on the executed statements \\
\end{tabular}
\end{table*}


To demonstrate that our state traversal method and variable summaries do not cause false alarms or omissions in detection,
we propose and prove the following theorems (informal) based on the above definitions: 
\begin{enumerate}
\item If a pair of differential states exists in the state space, then a pair of labeled differential states can be found using the difference-oriented state traversal method. (Refer to Theorem \ref{lemma3})
\item  If a pair of labeled differential states is found using the difference-oriented state traversal method, a pair of differential states must exist in the state space. (Refer to Theorem \ref{lemma4})
\item For a pair of labeled differential states, the summaries of all the branch variables are also different. (Refer to Theorem \ref{lemma1})
\end{enumerate}

The formal theorems and their proofs are as follows:

\begin{theorem}
\label{lemma3}

Given a contract $C$, if there exist two transition sequences 
$$s_0 {\stackrel{f_1(M_1)}{\longrightarrow}}_{x_1} s_1 ...  {\stackrel{f_n(M_n)}{\longrightarrow}}_{x_n} s_n $$
$$s_0 {\stackrel{f_1(M_1)}{\longrightarrow}}_{x_1'} s_1' ...  {\stackrel{f_n(M_n)}{\longrightarrow}}_{x_n'} s_n' $$
that satisfy 

a) $s_n$, $s_n'$ are differential states

b) $\exists i.\ x_i \neq x_i'$

then there exist two transition sequences
$$l_0 {\stackrel{f_1(M_1)}{\longrightarrow}}_{y_1} l_1 ...   {\stackrel{f_n(M_n)}{\longrightarrow}}_{p} l_n$$
$$l_0 {\stackrel{f_1(M_1)}{\longrightarrow}}_{y_1'} l_1 ...   {\stackrel{f_n(M_n)}{\longrightarrow}}_{o} l_n'$$
that satisfy

c) $l_0 = (s_0.\sigma,s_0.\pi,\emptyset)$

d) $l_n$, $l_n'$ are labeled differential states and $BV(s_n,s_n') \subseteq BV(l_n,l_n')$

e) $\forall i < n.\ y_i = y_i' = p$
\end{theorem}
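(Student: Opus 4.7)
The plan is a direct constructive proof. I set $y_i = y_i' = p$ for all $i < n$, $y_n = p$, and $y_n' = o$, then run the all-privileged prefix $l_0 \to_p l_1 \to_p \cdots \to_p l_{n-1}$ through labeled symbolic execution and branch at step $n$ into $l_n$ (via $p$) and $l_n'$ (via $o$). At each intermediate step $i$, the algorithm's implicit $p$-vs-$o$ comparison of $f_i$ from $l_{i-1}$ seeds labels in $l_i$ on every variable that differs between the two executions, and these labels propagate forward through subsequent calls to \texttt{labeled\_symbolic\_exec} along data- and control-dependency edges. The base case $n = 1$ is immediate: condition (b) forces $x_1 \neq x_1'$; since $l_0$ carries no labels, the values of $l_1, l_1'$ coincide with those of $s_1, s_1'$ (after possibly swapping the $p/o$ roles of the two constructed sequences), so $BV(l_1, l_1') = BV(s_1, s_1')$ and the all-$p$ prefix constraint holds vacuously.

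The technical core is a label-tracking lemma, which I would prove by induction on $i$: for every variable $v$ and every step $i \leq n$, if some pair of account sequences agreeing on an initial prefix and diverging within the first $i$ steps yields differing values of $v$ at step $i$, then either $l_i.\sigma(v)$ already separates those values or $l_i.\theta(v) = \textnormal{true}$. The inductive step combines the seeding of labels at each step's implicit comparison with the structural propagation performed by \texttt{labeled\_symbolic\_exec}, which tracks the same dependency structure that governs value propagation in the original semantics. Applied at $i = n$ to each $v \in BV(s_n, s_n')$, the lemma yields the required inclusion $BV(s_n, s_n') \subseteq BV(l_n, l_n')$: differences that originate from $x_n \neq x_n'$ are captured by the value split at the final step, while differences inherited from some earlier $x_j \neq x_j'$ with $j < n$ are captured by labels on $l_n$ propagated forward through the all-privileged trace.

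The main obstacle is that the labeled trace's intermediate states $l_1, \ldots, l_{n-1}$ generally carry different concrete values than either original trace's intermediate states, so one cannot simply reuse the original execution's data-flow reasoning. I plan to address this by arguing at the level of symbolic paths rather than concrete executions: label propagation inside \texttt{labeled\_symbolic\_exec} tracks a structural dependency graph that is shared across all account choices, so every chain of dependencies producing a branch variable in the original trace corresponds to a chain of label propagations in the labeled trace. Special care is needed for the label-removal rule, where a labeled variable reassigned independently of any labeled variable loses its label; I would argue that such a reassignment writes the same value under every account sequence, and thus removes a label exactly from variables whose values are no longer account-sensitive downstream through that write.
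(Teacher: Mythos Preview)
Your approach is essentially the paper's: both argue by induction on the sequence length, fix the all-$p$ labeled prefix with a $p$/$o$ split at the final step, and show that every branch variable is either separated by values at that last step or carries a label propagated forward from an earlier divergence. The paper structures its inductive step as an explicit case split on whether $x_{i+1} = x_{i+1}'$ (pure label propagation from the induction hypothesis) versus $x_{i+1} \neq x_{i+1}'$ (a mix of propagation and direct value comparison), whereas you package the same reasoning as a standalone label-tracking lemma; you are also more explicit than the paper about two subtleties it leaves implicit---the mismatch between $l_i.\sigma$ and $s_i.\sigma$, and the label-removal rule---but the underlying argument is the same.
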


\begin{proof}
We proceed by induction over the number of states in one sequence $i$.

\textit{Base case}. For $i$ = 1, assume that $x_1 = p$, then we have that 
$s_0 {\stackrel{f_1(M_1)}{\longrightarrow}}_{p} s_1$, $s_0 {\stackrel{f_1(M_1)}{\longrightarrow}}_{o} s_1'$,
$l_0 {\stackrel{f_1(M_1)}{\longrightarrow}}_{p} l_1$, $l_0 {\stackrel{f_1(M_1)}{\longrightarrow}}_{o} l_1'$.

1) By condition c), we have 
$l_1.\sigma = s_1.\sigma$, $l_1'.\sigma = s_1'.\sigma$.

2) By Definition \ref{def1}, we have 
$\exists v \in V_C.\ s_1.\sigma(v) \neq s_1'.\sigma(v)$

3) By 1) and 2), we have
 $\forall v \in V_C.\ (s_1.\sigma(v) \neq s_1'.\sigma(v)) \leftrightarrow  (l_1.\sigma(v) \neq l_1'.\sigma(v))$

4) By Definition of labeled states, we have
$\forall v \in V_C.\ (l_1.\theta(v) = true) \leftrightarrow  (l_1.\sigma(v) \neq l_1'.\sigma(v))$

5) By 3), 4) and Definition \ref{def2}, we have that $l_1$, $l_1'$ are labeled differential states and $BV(l_1,l_1') = BV(s_1,s_1')$, Theorem \ref{lemma3} holds. (The proofs of the case that $x_1 = o$ is similar and we omit it for brevity.)

\textit{Inductive step}. Assume Theorem \ref{lemma3} holds for $i \geq 1$. We show that the theorem holds for $i + 1$ case by case.

\textbf{Case 1}. $x_{i+1} = x_{i+1}'$.

1) For all $v \in ( BV(s_{i+1},s_{i+1}') \setminus BV(s_i,s_i') )$, $v$ is dependent of variables in $BV(s_i,s_i')$.

2) By the assumption that Theorem \ref{lemma3} holds for $i$ and Definition of labeled states, we have $\forall v \in BV(s_i,s_i').\ l_{i+1}.\theta(v) = true$.

3) By 1) and 2), we have $\forall v \in ( BV(s_{i+1},s_{i+1}') \setminus BV(s_i,s_i') ).\ l_{i+1}.\theta(v) = true$.

4) By 2) and 3), we have $\forall v \in BV(s_{i+1},s_{i+1}').$ $l_{i+1}.\theta(v) = true$.

5) By 4) and Definition \ref{def2}, we have that $l_{i+1}$, $l_{i+1}'$ are labeled differential states and $BV(s_{i+1},s_{i+1}') \subseteq BV(l_{i+1},l_{i+1}')$,  Theorem \ref{lemma3} holds for $i+1$.

\textbf{Case 2}. $x_{i+1} \neq x_{i+1}'$.

1) We have $s_i {\stackrel{f_{i+1}(M_{i+1})}{\longrightarrow}}_{p} s_{i+1}$, $s_i {\stackrel{f_{i+1}(M_{i+1})}{\longrightarrow}}_{o} s_{i+1}'$,
   $l_i {\stackrel{f_{i+1}(M_{i+1})}{\longrightarrow}}_{p} l_{i+1}$, $l_i {\stackrel{f_{i+1}(M_{i+1})}{\longrightarrow}}_{o} l_{i+1}'$.

2) By Definition \ref{def1}, we have $\forall v \in ( BV(s_{i+1},s_{i+1}') \setminus BV(s_i,s_i') ). s_{i+1}.\sigma(v) \neq s_{i+1}'.\sigma(v)$.

3) For $v \in ( BV(s_{i+1},s_{i+1}') \setminus BV(s_i,s_i') )$, we consider two cases. If $v$ is dependent on variables in $BV(s_i,s_i')$, the proofs are similar to Case 1 and are omitted. For variables $v$ that are not dependent on those in $BV(s_i,s_i')$, by 1) and 2), we  have $l_{i+1}.\sigma(v) \neq l_{i+1}'.\sigma(v)$.

4) By 3), we have $\forall v \in ( BV(s_{i+1},s_{i+1}') \setminus BV(s_i,s_i') ). v \in BV(l_{i+1}, l_{i+1}')$.

5) By the assumption that Theorem \ref{lemma3} holds for $i$ and Definition of labeled states, we have $\forall v \in BV(s_i,s_i').\ l_{i+1}.\theta(v) = true$.

6) By 4) and 5), we have that $l_{i+1}$, $l_{i+1}'$ are labeled differential states and $BV(s_{i+1},s_{i+1}') \subseteq BV(l_{i+1},l_{i+1}')$,  Theorem \ref{lemma3} holds for $i+1$.

\end{proof}

\begin{theorem}
\label{lemma4}

Given a contract $C$, if there exist two labeled transition sequences
$$l_0 {\stackrel{f_1(M_1)}{\longrightarrow}}_{y_1} l_1 ...   {\stackrel{f_n(M_n)}{\longrightarrow}}_{p} l_n$$
$$l_0 {\stackrel{f_1(M_1)}{\longrightarrow}}_{y_1'} l_1 ...   {\stackrel{f_n(M_n)}{\longrightarrow}}_{o} l_n'$$
that satisfy

a) $l_0.\theta = \emptyset$

b) $l_n$, $l_n'$ are labeled differential states 

c) $\forall i < n.\ y_i = y_i' = p$

then there exist two transition sequences 
$$s_0 {\stackrel{f_1(M_1)}{\longrightarrow}}_{x_1} s_1 ...  {\stackrel{f_n(M_n)}{\longrightarrow}}_{x_n} s_n $$
$$s_0 {\stackrel{f_1(M_1)}{\longrightarrow}}_{x_1'} s_1' ...  {\stackrel{f_n(M_n)}{\longrightarrow}}_{x_n'} s_n' $$
that satisfy 

d) $s_0 = (l_0.\sigma,l_0.\pi)$

e) $s_n$, $s_n'$ are differential states and $BV(l_n,l_n')  \subseteq BV(s_n,s_n')$

f) $\exists i.\ x_i \neq x_i'$

\end{theorem}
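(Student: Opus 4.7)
The plan is to prove Theorem~\ref{lemma4} by induction on $n$, mirroring the proof of Theorem~\ref{lemma3} but running in the opposite direction: starting from a labeled-state witness and recovering a concrete-state witness.

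For the base case $n = 1$, I would exploit the fact that $l_0.\theta = \emptyset$: no labels can be propagated forward by the data-flow rules of $labeled\_symbolic\_exec$, so any label appearing in $l_1.\theta$ must have been introduced by the comparison of $l_1$ with $l_1'$ at this very step, which itself requires a value difference. Consequently condition~c) of Definition~\ref{def2} collapses to $\exists v \in V_C.\ l_1.\sigma(v) \neq l_1'.\sigma(v)$, and setting $s_0 = (l_0.\sigma, l_0.\pi)$ together with the privileged and ordinary executions of $f_1(M_1)$ immediately yields a differential pair $s_1, s_1'$ with $BV(l_1,l_1') \subseteq BV(s_1,s_1')$, $x_1 = p$, $x_1' = o$.

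For the inductive step, I would classify each branch variable $v \in BV(l_{i+1},l_{i+1}')$ by the source of its difference. Variables with $l_{i+1}.\sigma(v) \neq l_{i+1}'.\sigma(v)$ arise from the differing accounts at step $i+1$; variables for which only $l_{i+1}.\theta(v) = true$ cannot have been newly labeled by the step $i+1$ comparison (that would require a value difference), so their labels must have been propagated through $f_{i+1}$ from labels already present in $l_i.\theta$, which in turn were introduced by earlier comparisons against shadow ordinary executions. I would trace each such label back to its earliest origin step, apply the inductive hypothesis at step $i$ to the shadow labeled pair $(l_i, l_i^{(o)})$ to obtain transition sequences whose terminal states $s_i, s_i''$ are differential and cover all relevant ancestor branch variables, and then extend the first sequence by the privileged execution of $f_{i+1}$ and the second by the ordinary execution. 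The data-flow dependencies recorded by the labels carry the ancestor differences forward so as to realize the label-propagated branch variables as value divergences in $(s_{i+1},s_{i+1}')$, while the step $i+1$ account divergence materializes the value-originated ones.

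The main obstacle will be reconciling these two kinds of branch variables in a single pair of transition sequences: step-$(i+1)$ value divergences want differing accounts at the last step, but label-propagated ones want differing accounts at an earlier origin step. The construction above uses both simultaneously, and its correctness hinges on the soundness of the label-propagation and label-removal rules of $labeled\_symbolic\_exec$---a variable retains a label in $l_{i+1}$ only when its symbolic value genuinely depends on a value that differed at the origin, so replacing the privileged value by its ordinary counterpart at the origin and replaying the identical subsequent functions necessarily perturbs the final value. Carefully verifying this correspondence, so as to rule out spurious labels breaking the inclusion $BV(l_{i+1},l_{i+1}') \subseteq BV(s_{i+1},s_{i+1}')$, is the central technical burden of the argument.
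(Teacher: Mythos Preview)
Your proposal follows essentially the same route as the paper's proof: induction on the sequence length, a base case that exploits $l_0.\theta=\emptyset$ to collapse condition~c) of Definition~\ref{def2} to a pure value difference, and an inductive step that splits each $v\in BV(l_{i+1},l_{i+1}')$ into the case $l_{i+1}.\theta(v)=true$ (handled by passing to the shadow ordinary transition $l_{i-1}\stackrel{f_i(M_i)}{\longrightarrow}_o l_o$, invoking the hypothesis at step~$i$, and extending both sequences by one more step) and the case $l_{i+1}.\sigma(v)\neq l_{i+1}'.\sigma(v)$ (handled directly by the account divergence at step~$i{+}1$).

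The one place you go beyond the paper is in explicitly flagging the obstacle of realising \emph{all} branch variables with a \emph{single} pair $(s_{i+1},s_{i+1}')$. The paper's proof does not actually discharge this: it builds, in effect, a separate witnessing pair of sequences for each $v$ (one construction in its Case~1, another in its Case~2) and then simply concludes $\forall v\in BV(l_{i+1},l_{i+1}').\ v\in BV(s_{i+1},s_{i+1}')$ without arguing that one fixed pair covers every $v$ simultaneously. So your instinct that this is ``the central technical burden'' is well-placed---the paper's own argument is really a per-variable statement, and the uniform inclusion $BV(l_n,l_n')\subseteq BV(s_n,s_n')$ in condition~e) is left at the same level of rigor there. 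If you carry out your plan you will have matched the paper; if you additionally succeed in merging the two cases into one pair, you will have gone further than the paper does.
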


\begin{proof}
We proceed by induction over the number of states in one sequence $i$.

\textit{Base case}. Similar to \textit{Base case} of proofs for Theorem \ref{lemma3}.

\textit{Inductive step}. Assume Theorem \ref{lemma4} holds for $i \geq 1$. 
By condition b) and Defintion of labeled differential states , we have $\forall v \in BV(l_{i+1},l_{i+1}').\ l_{i+1}.\sigma(v) \neq l_{i+1}'.\sigma(v) \lor l_{i+1}.\theta(v) = true$. For each $v \in  BV(l_{i+1},l_{i+1}')$, we discuss whether it appears in  $BV(s_n,s_n')$ case by case.

\textbf{Case 1}. $l_{i+1}.\theta(v) = true$.

1) By the assumption that Theorem \ref{lemma4} holds for $i$, there exists a transition $l_{i-1} {\stackrel{f_i(M_i)}{\longrightarrow}}_{o} l_o$. $l_i$, $l_o$ are labeled differential states and $v \in BV(l_i,l_o)$.

2) By 1), we can construct two transition sequences 
$$s_0 {\stackrel{f_1(M_1)}{\longrightarrow}}_{x_1} s_1 ... {\stackrel{f_i(M_i)}{\longrightarrow}}_{p} s_i$$
$$s_0 {\stackrel{f_1(M_1)}{\longrightarrow}}_{x_1'} s_1' ... {\stackrel{f_i(M_i)}{\longrightarrow}}_{o} s_i' $$
such that 
$\forall j < i.\ x_j = x_j' = p$ and $s_i.\sigma = l_i.\sigma$ and $s_i'.\sigma = l_o.\sigma$.

3) By 1) and 2), $s_i.\sigma(v) \neq s_i'.\sigma(v)$.

4) By 2) and 3),  we can construct two transition sequences 
$$s_0 {\stackrel{f_1(M_1)}{\longrightarrow}}_{x_1} s_1 ... {\stackrel{f_i(M_i)}{\longrightarrow}}_{p} s_i {\stackrel{f_{i+1}(M_{i+1})}{\longrightarrow}}_{p} s_{i+1}$$
$$s_0 {\stackrel{f_1(M_1)}{\longrightarrow}}_{x_1'} s_1' ... {\stackrel{f_i(M_i)}{\longrightarrow}}_{o} s_i' {\stackrel{f_{i+1}(M_{i+1})}{\longrightarrow}}_{o} s_{i+1}' $$
such that 
$\forall j < i.\ x_j = x_j' = p$ and $s_{i+1}.\sigma(v) \neq s_{i+1}'.\sigma(v)$, thus $v \in BV(s_{i+1},s_{i+1}')$.

\textbf{Case 2}. $l_{i+1}.\sigma(v) \neq l_{i+1}'.\sigma(v)$.

Since the source of $l_{i+1}$ and $l_{i+1}$ are same, the value of $v$ can be assigned as different by executing $f_{i+1}$ using different accounts.
Therefore, we can construct two transition sequences 
$$s_0 {\stackrel{f_1(M_1)}{\longrightarrow}}_{x_1} s_1 ...  {\stackrel{f_{i+1}(M_{i+1})}{\longrightarrow}}_{p} s_{i+1}$$
$$s_0 {\stackrel{f_1(M_1)}{\longrightarrow}}_{x_1'} s_1' ... {\stackrel{f_{i+1}(M_{i+1})}{\longrightarrow}}_{o} s_{i+1}' $$
such that 
$s_{i+1}.\sigma(v) \neq s_{i+1}'.\sigma(v)$, thus $v \in BV(l_i,l_o)$.

Based on the above two cases, we have $\forall v \in  BV(l_{i+1},l_{i+1}'). v \in BV(s_{i+1},s_{i+1}')$. Theorem \ref{lemma4} holds for $i+1$.

\end{proof}

\begin{theorem}
\label{lemma1}

Given a contract $C$, if there exist two labeled transition sequences
$$l_0 {\stackrel{f_1(M_1)}{\longrightarrow}}_{y_1} l_1 ...   {\stackrel{f_n(M_n)}{\longrightarrow}}_{p} l_n$$
$$l_0 {\stackrel{f_1(M_1)}{\longrightarrow}}_{y_1'} l_1 ...   {\stackrel{f_n(M_n)}{\longrightarrow}}_{o} l_n'$$
that satisfy

a) $l_0.\theta = \emptyset$

b) $l_n$, $l_n'$ are labeled differential states 

c) $\forall i < n.\ y_i = y_i' = p$

then we have 

$\forall v \in BV(l_n,l_n').\ (\phi(l_{n-1},l_n,v) \neq \phi(l_{n-1},l_n',v) \lor l_n.\theta(v) = true)$.
\end{theorem}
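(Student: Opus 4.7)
The plan is to fix an arbitrary branch variable $v \in BV(l_n,l_n')$ and proceed by case analysis on the disjunction in Definition~\ref{def2} that qualifies $v$ as a branch variable. If $l_n.\theta(v) = \text{true}$, the second disjunct of the conclusion holds immediately, so the only nontrivial case is $l_n.\theta(v) = \text{false}$ together with $l_n.\sigma(v) \neq l_n'.\sigma(v)$. In that case I need to produce $\phi(l_{n-1},l_n,v) \neq \phi(l_{n-1},l_n',v)$.

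The structural leverage is that conditions (a) and (c) force both $l_n$ and $l_n'$ to be reached from a common source $l_{n-1}$ by executing the same $f_n(M_n)$, differing only in the caller permission. Since $\phi = \phi_\sigma \cup \phi_\pi$, a difference in either component suffices. I would then split by the type of $v$ using TABLE~\ref{tab:sum}: for numeric $v$, if the sign of the change relative to $l_{n-1}.\sigma(v)$ differs between $l_n$ and $l_n'$ then \emph{is\_increased} or \emph{is\_decreased} separates them in $\phi_\sigma$; otherwise both executions moved $v$ in the same direction but to distinct values, which, since all state variables agree in $l_{n-1}$ and both calls receive the same $M_n$, can only arise from the two executions following different control-flow paths inside $f_n$, so $l_n.\pi \neq l_n'.\pi$ and \emph{related\_const\_var} (computed by $\phi_\pi$) differs. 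The address case is symmetric via \emph{is\_constant}, \emph{is\_changed}, and \emph{related\_const\_var}. The mapping/ether case reduces to the value-variable summaries and is handled by applying the same argument to whichever indexed entry witnesses $l_n.\sigma(v) \neq l_n'.\sigma(v)$. For \texttt{exec\_state}, differing terminating outcomes directly flip one of the \emph{success}/\emph{revert}/\emph{selfdestruct} keys.

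The step I expect to be the main obstacle is the numeric/address subcase where both executions move $v$ in the same direction to different magnitudes: naively, \emph{is\_increased} and \emph{is\_decreased} coincide, and one must argue that \emph{related\_const\_var} really does separate the summaries. To close this, I would appeal to the fact that from a common $l_{n-1}$ the only environmental datum that can diverge under same-argument, same-path execution is \texttt{msg.sender}, so any value split with identical $\pi$ must have routed the assignment to $v$ through a \texttt{msg.sender}-dependent read; but such a read forces a distinguishable statement trace (or a labeled intermediate), and in either event $\phi_\pi$ records it. Together with the trivial label case, this covers every $v \in BV(l_n,l_n')$, yielding the required conclusion.
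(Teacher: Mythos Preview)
Your proposal is correct and follows essentially the same route as the paper's proof: dispose of the $l_n.\theta(v)=\text{true}$ case trivially, then for $l_n.\sigma(v)\neq l_n'.\sigma(v)$ split by the type of $v$ per TABLE~\ref{tab:sum}, using the Category~I keys (\emph{is\_increased}/\emph{is\_decreased}, \emph{is\_changed}/\emph{is\_constant}, the \texttt{exec\_state} flags) when they already separate, and falling back to $l_n.\pi\neq l_n'.\pi$ and hence a \emph{related\_const\_var} difference otherwise, with mappings reduced to their value entries. The only organizational difference is that the paper pulls out the ``one side unchanged, other side changed'' situation as a separate Case~1 before the type split, whereas you absorb it into the type-by-type analysis; your extra justification for why $\pi$ must differ in the equal-trend subcase (via the \texttt{msg.sender} argument) is in fact more detailed than what the paper offers at that step.
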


\begin{proof}
By Definition \ref{def2}, we have $\forall v \in BV(l_n,l_n'). l_n.\sigma(v) \neq l_n'.\sigma(v) \lor l_n.\theta(v) = true$. 
For variables that satisfy $l_n.\theta(v) = true$, Theorem \ref{lemma1} obviously holds.
So we dicuss whether Theorem \ref{lemma1} holds for the variable $v$ that satisfies $l_n.\theta(v) = false$:

\textbf{Case 1}. $ l_n.\sigma(v) = l_{n-1}.\sigma(v) \neq l_n'.\sigma(v)$  or $ l_n.\sigma(v) \neq l_{n-1}.\sigma(v) = l_n'.\sigma(v)$.

1) By Definition of $\phi_{\sigma}$, we have $\phi_{\sigma}(l_{n-1}.\sigma,l_n.\sigma,v) \neq \phi_{\sigma}(l_{n-1}.\sigma,l_n'.\sigma,v)$.

2) By 1) and Definition of $\phi$, we have $\phi(l_{n-1},l_n,v) \neq \phi(l_{n-1},l_n',v)$.

\textbf{Case 2}. $ l_n.\sigma(v) \neq l_{n-1}.\sigma(v) \neq l_n'.\sigma(v)$.

\textbf{Subcase 2a}.  The variable $v$ is of Numeric Type.

1) If the growth trends of $l_n.\sigma(v)$ and $l_n'.\sigma(v)$ compared to $l_{n-1}.\sigma(v)$ are different, $\phi_{\sigma}(l_{n-1}.\sigma,l_n.\sigma,v) \neq \phi_{\sigma}(l_{n-1}.\sigma,l_n'.\sigma,v)$. Theorem \ref{lemma1} holds.

2) Otherwise, assume that 
 $l_n.\sigma(v) = l_{n-1}.\sigma(v)+x$ and $l_n'.\sigma(v) = l_{n-1}.\sigma(v)+y$ 
or 
$l_n.\sigma(v) = l_{n-1}.\sigma(v)-x$ and $l_n'.\sigma(v) = l_{n-1}.\sigma(v)-y$,
then we have that

3) By 2), we have $x \neq y$.

4) By 3), we get that $l_n.\pi \neq l_n'.\pi$ since they contain different assignment statements for $v$. 

5) By 4) and TABLE \ref{tab:sum}, we have 
\begin{align}
\phi(l_{n-1},l_n,v)[related\_const\_var] \neq \notag \\ \phi(l_{n-1},l_n',v)[related\_const\_var]\notag
\end{align}
Therefore, $\phi(l_{n-1},l_n,v) \neq \phi(l_{n-1},l_n',v)$. Theorem \ref{lemma1} holds.

\textbf{Subcase 2b}. The variable $v$ is of Address Type.

1) If one of $l_n.\sigma(v)$ and $l_n'.\sigma(v)$ is a constant address and the other is related to a variable whose value can be changed, we have that
$$\phi(l_{n-1},l_n,v)[is\_constant] \neq \phi(l_{n-1},l_n',v)[is\_constant]$$
Theorem \ref{lemma1} holds.

2) Otherwise, assume that  $l_n.\sigma(v) = x$ and $l_n'.\sigma(v) = y$.

3) By 2), we have $x \neq y$. 

4) By 3), we get that $l_n.\pi \neq l_n'.\pi$ since they contain different assignment statements for $v$. 

5) By 4) and TABLE \ref{tab:sum}, we have 
\begin{align}
\phi(l_{n-1},l_n,v)[related\_const\_var] \neq \notag \\ \phi(l_{n-1},l_n',v)[related\_const\_var]\notag
\end{align}
Therefore, $\phi(l_{n-1},l_n,v) \neq \phi(l_{n-1},l_n',v)$. Theorem \ref{lemma1} holds.



\textbf{Subcase 2c}.  The variable $v$ is \texttt{exec\_state}.

By Definition of $\texttt{exec\_state}$ and  TABLE \ref{tab:sum}, the value of $\texttt{exec\_state}$ is corresponding to the values of keys in its summary.
Therefore, if $l_n.\sigma(v) \neq l_n'.\sigma(v)$, we have $\phi(l_{n-1},l_n,v) \neq \phi(l_{n-1},l_n',v)$. Theorem \ref{lemma1} holds.

\textbf{Subcase 2d}. The variable $v$ is of Mapping Type or variable $v$ is \texttt{ether}.

1) since  $l_n.\sigma(v)$ and $l_n'.\sigma(v)$ are maps, we assume that the key variable set of $l_n.\sigma(v)$ is $K_p$ and the key variable set of $l_n'.\sigma(v)$ is $K_o$.

2) By TABLE \ref{tab:sum}, the keys of $\phi(l,l_n,v)$ is $K_p$ and the keys of $\phi(l,l_n',v)$ is $K_o$.

3) If $K_p \neq K_o$, By 2), we have  $\phi(l,l_n,v) \neq \phi(l,l_n',v)$. Theorem \ref{lemma1} holds.

4) Otherwise, if $K_p = K_o$, according to $K_p = K_o$ and $l_n.\sigma(v) \neq l_n'.\sigma(v)$, we have that $$\exists k \in K_p. l_n.\sigma(v[k]) \neq l_n'.\sigma(v[k])$$
Then we can prove $\phi(v,l_n) \neq \phi(v,l_n')$ according to the type of $v[k]$, which is similar to the above cases, and we omit this part of proofs for brevity.
\end{proof}

\end{document}